\newcommand{\imagesfolder}{images}
\renewcommand{\a}{\mathbf{a}}
\newcommand{\etabf}{{\boldsymbol\eta}}
\newcommand{\mgen}{m_\text{gen}}
\title{A Combinatorial Perspective on the Optimization of Shallow ReLU Networks}
\author{%
  Michael Matena \\
  Department of Computer Science\\
  University of North Carolina at Chapel Hill\\
  Chapel Hill, NC 27599 \\
  \texttt{mmatena@cs.unc.edu} \\
   \And
  Colin Raffel \\
  Department of Computer Science\\
  University of North Carolina at Chapel Hill\\
  Chapel Hill, NC 27599 \\
  \texttt{craffel@cs.unc.edu} \\
}
\begin{document}

\maketitle

\begin{abstract}
The NP-hard problem of optimizing a shallow ReLU network can be characterized as a combinatorial search over each training example's  activation pattern followed by a constrained convex problem given a fixed set of activation patterns.
We explore the implications of this combinatorial aspect of ReLU optimization in this work.
We show that it can be naturally modeled via a geometric and combinatoric object known as a zonotope with its vertex set isomorphic to the set of feasible activation patterns.
This assists in analysis and provides a foundation for further research.
We demonstrate its usefulness when we explore the sensitivity of the optimal loss to perturbations of the training data.
Later we discuss methods of zonotope vertex selection and its relevance to optimization.
Overparameterization assists in training by making a randomly chosen vertex more likely to contain a good solution.
We then introduce a novel polynomial-time vertex selection procedure that provably picks a vertex containing the global optimum using only double the minimum number of parameters required to fit the data.
We further introduce a local greedy search heuristic over zonotope vertices and demonstrate that it outperforms gradient descent on underparameterized problems.
\end{abstract}

\section{Introduction}
Neural networks have become commonplace in a variety of applications.
They are typically trained to minimize a loss on a given dataset of labeled examples using a variant of stochastic gradient descent.
However, our theoretical knowledge of neural networks and their training lags behind their practical developments.
 
Single-layer ReLU networks are an appealing subject for theoretical study.
The universal approximation theorem guarantees their expressive power while their relative simplicity makes analysis tractable \citep{hornik1991approximation}.
We restrict ourselves in this paper to studying empirical risk minimization (ERM) as was done in previous works \citep{du2018gradient, oymak2020toward}, which is justified since the train set performance tends to upper bound the test set performance.
Furthermore, modern neural networks achieve zero training loss but nevertheless generalize well \citep{kaplan2020scaling,nakkiran2021deep}.
Minimizing the training loss of a shallow ReLU network is a nonconvex optimization problem.
Finding its global minima is difficult and can in fact be shown to be NP-hard in general \citep{goel2020tight}.
\Citet{arora2016understanding} provide an explicit algorithm for finding the global minima by solving a set of convex optimization problems; however, the size of this set is exponential in both the input dimension $d$ and the number of hidden units $m$.



In this paper, we explore the combinatorial structure implicit in the global optimization algorithm of \citet{arora2016understanding}.
We start by using tools from polyhedral geometry to characterize the set of convex optimization problems and describe the relationships between the subproblems.
Notably, we are able to create a special type of polytope called a zonotope \citep{mcmullen1971zonotopes} whose vertices have a one-to-one correspondence with the convex subproblems and whose faces represent information about their relationships.
We then explore the combinatorial optimization problem implicit in shallow ReLU network empirical risk minimization using the zonotope formalism to help interpret our findings and assist in some proofs.

Since the computational complexity of optimization problems shapes our approach to solving them, we examine the reductions of NP-hard problems introduced in \citet{goel2020tight}.
The datasets produced have examples that are not in general position (i.e.\ they have nontrivial affine dependencies), which differs from most real-world datasets.
We prove that the global optimum of the loss of a shallow ReLU network over such a dataset can have a discontinuous jump for arbitrarily small perturbations of the data, which has a very natural interpretation in our zonotope formalism.
This means that a proof of the NP-hardness of ReLU optimization given training examples in general position does not follow from the results \citet{goel2020tight} via a simple continuity argument.
We therefore present a modification of their proof that uses a dataset in general position.



In contrast to the NP-hardness of general ReLU optimization, sufficient overparameterization allows gradient descent to provably converge to a global optimum in polynomial time, as demonstrated by \citep{du2018gradient, oymak2020toward}.
We interpret the proof methods generally used in these works as asserting that sufficient overparameterization allows gradient descent to bypass much of the combinatorial search over zonotope vertices by having a randomly chosen vertex be close to one with a good solution with high probability.
We then introduce a novel algorithm that finds a good zonotope vertex in polynomial time 
requiring only about twice the minimum number of hidden units required to fit the dataset.


Finally, we explore how gradient descent interacts with this combinatorial structure.
We provide empirical evidence that it can perform some aspects of combinatorial search but present an informal argument that it is suboptimal.
We reinforce this claim by showing that a greedy local search heuristic over the vertices of the zonotope outperforms gradient descent on some toy synthetic problems and simplifications of real-world tasks.
In contrast to the NP-hard worst case, these results suggest that the combinatorial searches encountered in practice might be relatively tractable.


We summarize our contributions as follows.
\begin{itemize}[noitemsep,topsep=0pt,parsep=2pt,partopsep=0pt,left=1em..2em]
    \item We are the first to provide an in-depth exposition of the combinatorial structure arising from the set of feasible activation patterns that is implicit in shallow ReLU network optimization. In particular, we show that this structure can be characterized exactly as a Cartesian power of the zonotope generated by the set of training examples.
    \item We use this formalism to prove necessary conditions for the global optimum of a shallow ReLU network to be discontinuous with respect to the training dataset. We show that this implies that previous NP-hardness proofs of ReLU optimization do not automatically apply to datasets satisfying realistic assumptions, which we rectify by presenting a modification that uses a dataset in general position.
    \item We explore the role that combinatorial considerations play in the relationship between overparameterization and optimization difficulty. In particular, we introduce a novel polynomial-time algorithm fitting a generic dataset using twice the minimum number of parameters needed.
    \item We introduce a novel heuristic algorithm that performs a greedy search along edges of a zonotope and show that it outperforms gradient descent on some toy datasets.
\end{itemize}

We hope that the tools we introduce are generally useful in furthering our understanding of ReLU networks.
Notably, they have deep connections to several well-established areas of mathematics \citep{mcmullen1971zonotopes, richter20176, ziegler2012lectures}, which might allow researchers to quickly make new insights by drawing upon existing results in those fields.




\section{Empirical Risk Minimization for ReLU Networks}



A single ReLU layer consists of an affine transformation followed by a coordinate-wise application of the ReLU nonlinearity $\phi(x) = \max\{x, 0\}$.
We can represent an affine transformation from $\R^d \to \R^m$ by an $m \times (d+1)$ matrix by representing its inputs in homogeneous coordinates, i.e.\ by appending a $(d+1)$-th coordinate to network inputs that is always equal to 1.
Hence a single ReLU layer with parameters $W$ can be written as $f_W(\x) = \phi(W\bar\x)$, where $\bar\x$ denotes $\x$ expressed in homogeneous coordinates.
A single hidden layer ReLU network consists of a single ReLU layer followed by an affine transformation.
We focus on the case of a network with scalar output, so the second layer can be represented by a vector $\v \in \R^{m+1}$.
Although the second layer parameters are trained jointly with the first layer in practice, we assume that they are fixed for our analysis.
This parallels simplifying assumptions made in previous work \citep{du2018gradient}.

Let $\ell: \R \times \R \to \R$ be a convex loss function such as MSE or cross-entropy.
Since the second layer parameters are fixed, we can incorporate them into a modified loss function $\Tilde\ell: \R^m\times\R\to\R$ given by $\Tilde\ell(\z, y) = \ell(\v^T\bar\z, y)$ that operates directly on the first layer's activations $\z$.
This modified loss function is convex since it is the composition of a convex function with an affine function.

Suppose we are given $\mathcal D = \{(\x_i,y_i)\}_{i=1}^N$ as the training dataset with $\x_i\in\R^d$ and $y_i\in\R$ for all $i = 1, \dotsc, N$.
Throughout this paper, we assume that $N > d + 1$.
Sometimes we will represent a dataset by a matrix $X \in \R^{(d+1)\times N}$ with each column corresponding to an example and its labels as the vector $\y \in \R^N$.
We say that $\mathcal D$ is in general position if there exist no nontrivial affine dependencies between the columns of $X$.
The empirical loss $L(W)$, also known as the empirical risk, is defined as the mean per-example loss
\begin{equation}\label{eq:def_emp_loss}
    L(W) = \frac{1}{N} \sum_{i=1}^N \Tilde\ell(f_W(\x_i), y_i).
\end{equation}

The goal of ERM is to find a set of parameters $W$ that minimizes this loss.
\Citet{arora2016understanding} were the first to introduce an algorithm for exact ERM.
We adapt their algorithm for the case of fixed second layer weights in \cref{alg:arora}, which has a running time of $O(N^{md}\poly(N, m, d))$.
The algorithm works by iterating over all feasible activation patterns
\begin{equation}\label{eq:def_aps}
    \mathcal A = \left\{ \mathbb I\{ W\bar X > 0\} \in \{0,1\}^{m \times N} \mid W \in \R^{m \times (d + 1)} \right\}.
\end{equation}
The subset of parameters corresponding to an activation pattern, which we call an activation region, can be expressed via a set of linear inequalities.
Within a single activation region, the map from parameter values to ReLU layer activations over the training dataset is linear.
Hence we can solve a constrained convex optimization problem to get the optimal parameters in each activation region.
Namely for a given $A \in \mathcal A$, we solve for $W \in \R^{m \times (d+1)}$ in the following
\begin{equation}\label{eq:activation_region_constrained_opt}
\begin{array}{ll@{}ll}
\text{minimize}  & \displaystyle\frac{1}{N} \sum_{i=1}^N \Tilde\ell(\a_i \odot (W\bar\x_i), y_i) \\
\text{subject to} & (2\a_i - 1) \odot (W\bar\x_i) \geq 0 \\
\end{array}
\end{equation}
where $\odot$ denotes the Hadamard product and $\a_i$ denotes the $i$-th column of $A$.
The global optimum then becomes the best optimum found over the entire set of activation regions.
Thus single-layer ReLU network ERM can be expressed as a combinatorial search over activation patterns with a convex optimization step per pattern.

\begin{wrapfigure}{L}{0.55\textwidth}
\begin{minipage}{0.55\textwidth}
\begin{algorithm}[H]
    \caption{Exact ERM \citep{arora2016understanding}}\label{alg:arora}
\begin{algorithmic}
   \STATE {\bfseries Input:} data $\mathcal D = \{\x_i,y_i\}_{i=1}^N$, 2nd layer $\v \in \R^{m + 1}$
   \STATE $\mathcal A \subseteq \{0,1\}^{m \times N}$ \COMMENT{feasible activation patterns \eqref{eq:def_aps}}
   \STATE $W^* \in \R^{m \times (d + 1)}$ \COMMENT{random initialization}
   \FOR{$A \in \mathcal A$}
      \STATE $W$ $\gets$ solution of \eqref{eq:activation_region_constrained_opt}
      \IF{$L(\Tilde W) < L(W^*)$}
          \STATE $W^*$ $\gets$ $W$
      \ENDIF
   \ENDFOR
   \STATE {\bfseries return} $W^*$
\end{algorithmic}
\end{algorithm}
\end{minipage}
\end{wrapfigure}

\section{Zonotope Formalism}\label{sec:comb_struct}
While \citet{arora2016understanding} mention that $\mathcal A$ arises from a set of hyperplanes induced by the training examples, they only use this connection to bound its cardinality $|\mathcal A| = O(N^{md})$.
However, hyperplane arrangements are well-studied geometric and combinatoric objects \citep{richter20176, stanley2004introduction}.
As such, we will see that we can use this connection to better characterize the combinatorial aspects of ReLU optimization.

\begin{figure*}[ht]
\begin{center}
\centerline{
\includegraphics[width=0.3\linewidth]{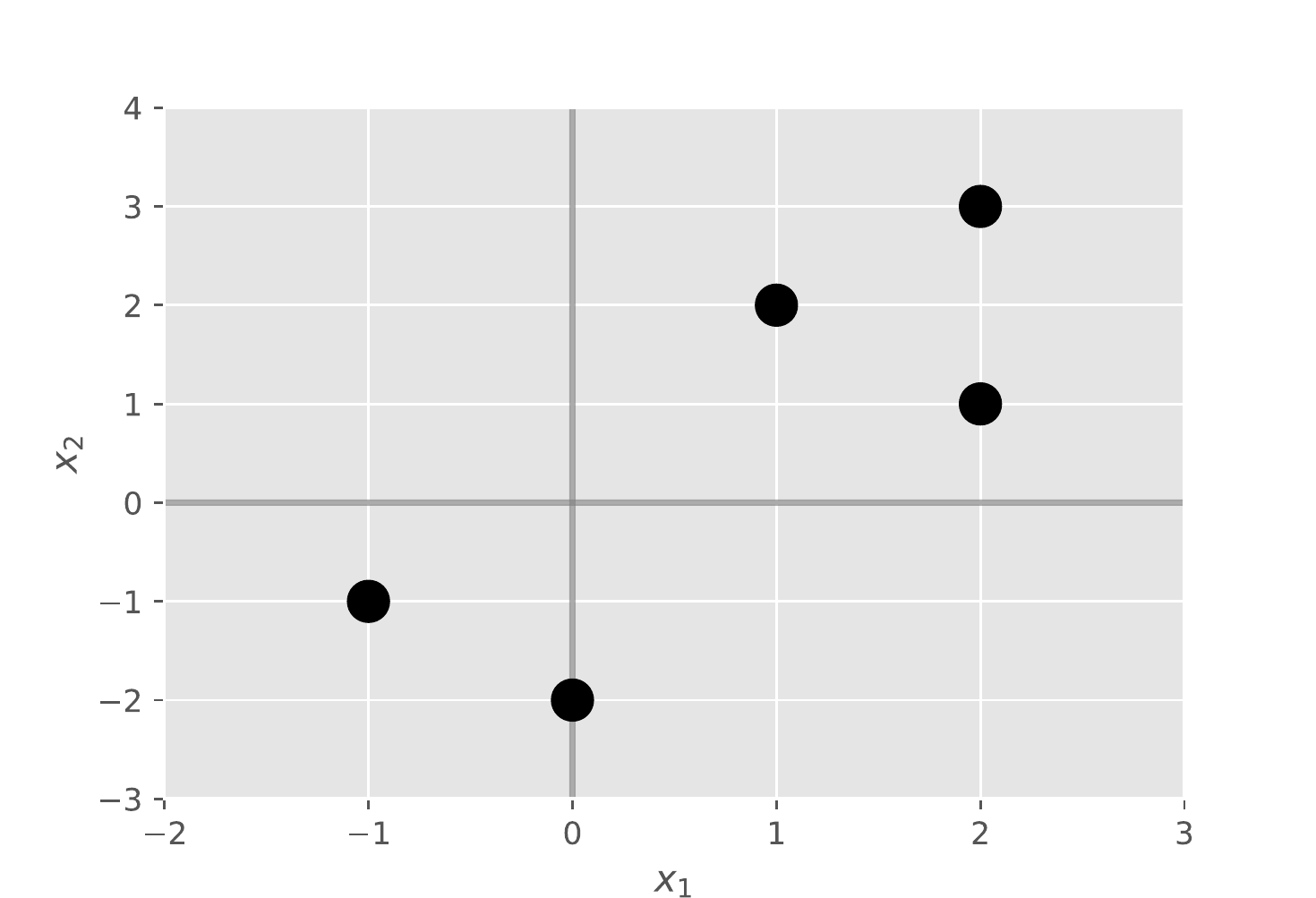}
\includegraphics[width=0.3\linewidth]{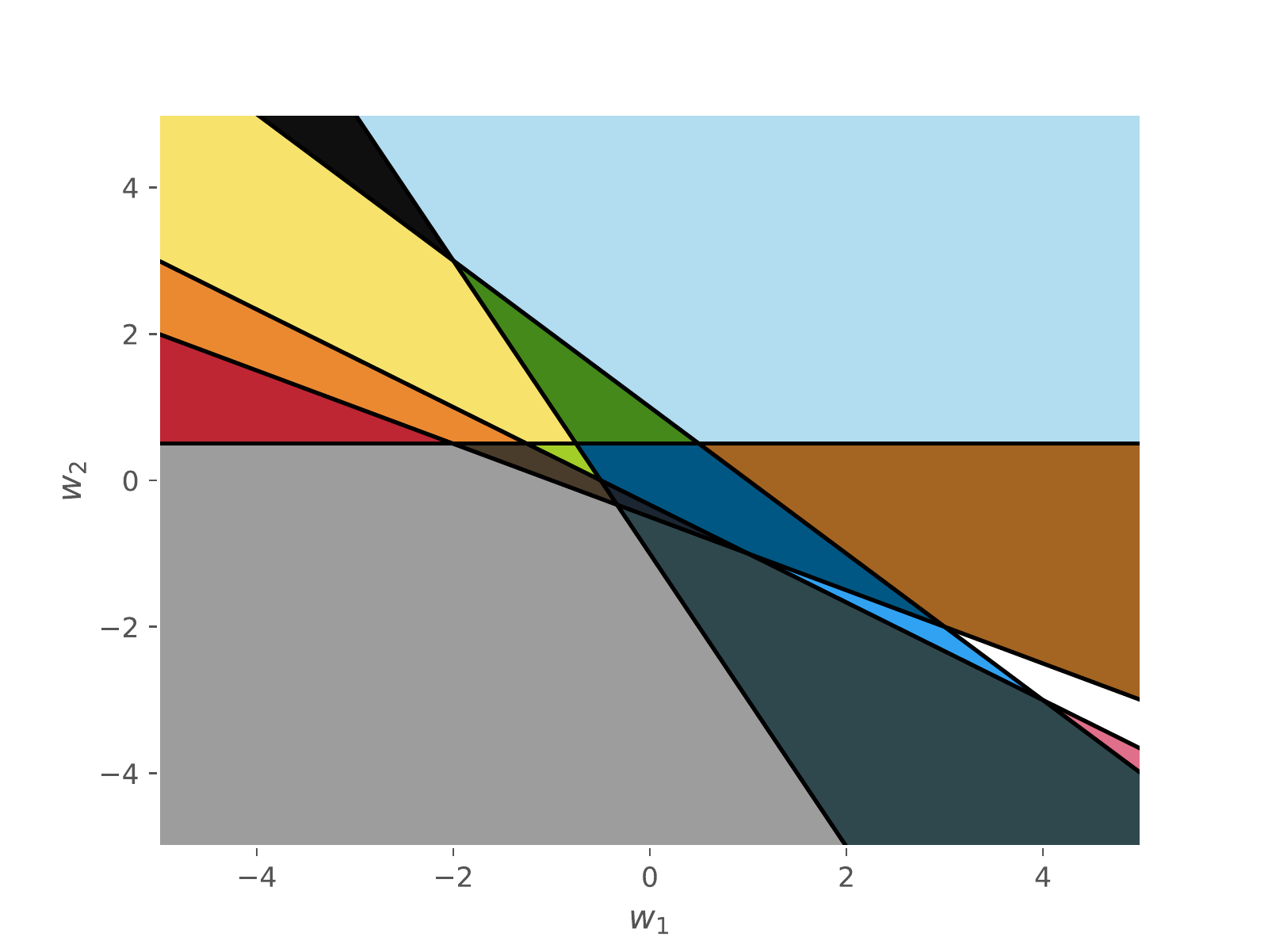}
\includegraphics[width=0.35\linewidth]{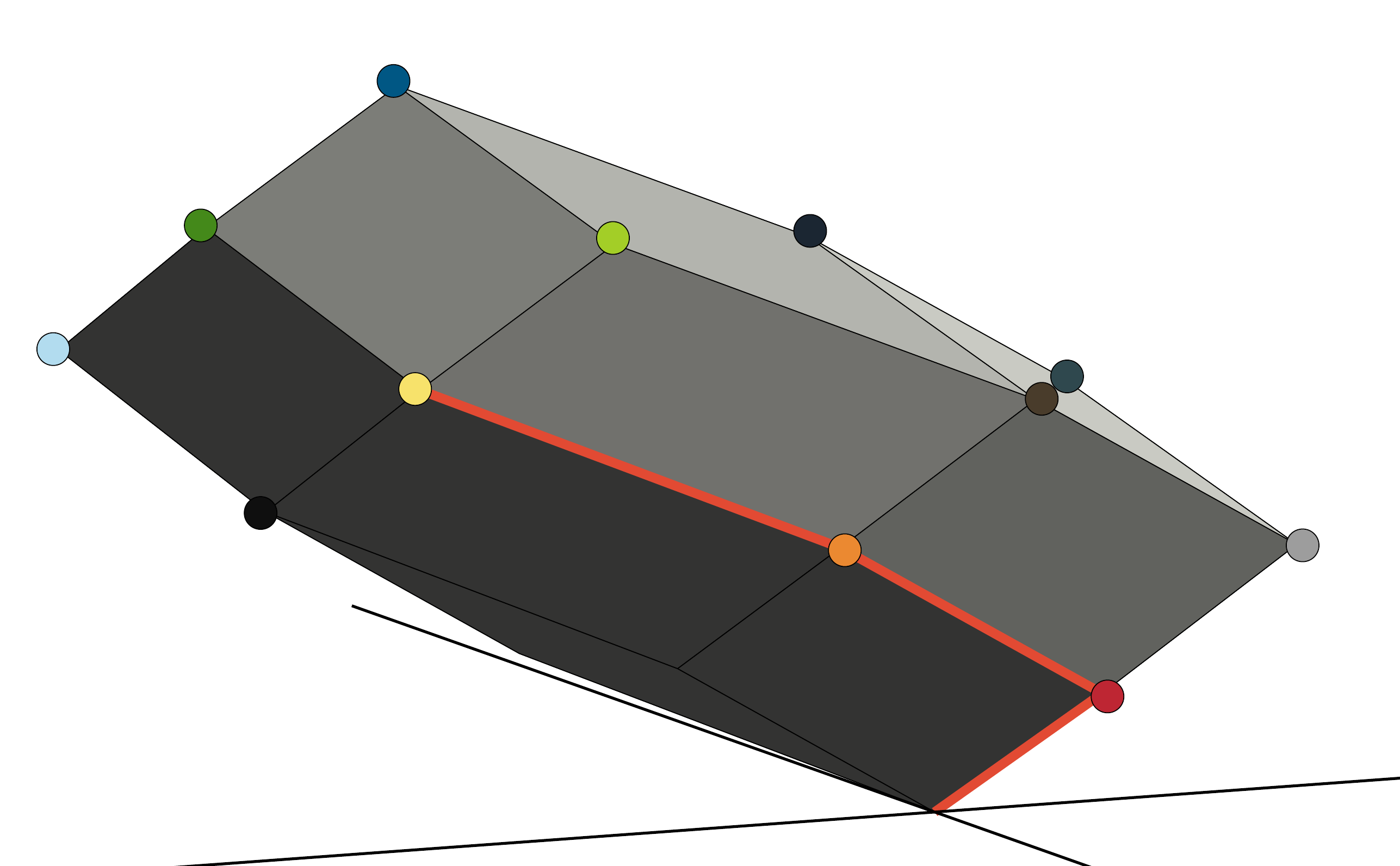}
}
\caption{
Left: A set of 5 training examples in $\R^2$.
Center: A two dimensional slice of parameter space $\R^3$ along the $w_3 = 1$ plane reflecting the polyhedral complex $\mathcal R$. The lines correspond to the set of hyperplanes $H_1^0, \dotsc, H_5^0$. The different shaded chambers correspond to different activation regions. Each chamber can be thought of as the base of cone whose apex is the origin.
Right: The zonotope $\mathcal Z$ for this dataset. The corresponding activation region for a vertex is indicated by the colored circles. Note how the edges and faces of $\mathcal Z$ capture the incidence structure of the activation regions. Each of the red lines is a translation of a (homogenized) training example.
Exactly these 3 training examples are active in the yellow activation region.
}
\label{fig:relu_zono_diagram}
\end{center}
\vskip -0.2in
\end{figure*}



Our mathematical tools for describing the combinatorial structure of shallow ReLU network optimization include oriented hyperplane arrangements, polyhedral sets, polyhedral complexes, and zonotopes.
\Cref{sec:math_background} provides an approachable overview of these topics for unfamiliar readers.

\paragraph*{Single Hidden Unit}

We start by considering a single ReLU unit $f_\w(\x) = \phi(\w^T\bar\x)$ parameterized by the vector $\w \in \R^{d+1}$.
Looking at its behavior as $\w$ ranges over $\R^{d+1}$ on a single training example $\x_i$, we see that there are two linear regimes depending on the sign of $\w^T\bar\x$. They are separated by the hyperplane in parameter space satisfying $\w^T\bar\x = 0$.
We can describe such behavior mathematically as an oriented hyperplane with the sign of $\w^T\bar\x$ providing its orientation.
The collection of oriented hyperplanes associated to each training example $\{\bar\x_i\}_{i=1}^N$ is known as an oriented hyperplane arrangement \citep{richter20176}. 

The structure imposed on parameter space $\R^{d+1}$ by this oriented hyperplane arrangement can be described as a polyhedral complex \citep{ziegler2012lectures}, which is a collection of polyhedral sets and their faces that fit together in a ``nice'' way.
The polyhedral complex $\mathcal R$ induced by the training set will contain codimension 0 sets called chambers.
These correspond exactly to activation regions.
Activation patterns have a one-to-one correspondence with the tuple of hyperplane orientations associated to each chamber.
The center panel of \cref{fig:relu_zono_diagram} provides an illustration of $\mathcal R$ for an example dataset.

The dual zonotope of a polyhedral complex is a single polytope providing an alternate representation of its combinatorial structure \citep{ziegler2012lectures}.
Each dimension $k$ member of the polyhedral complex has a corresponding codimension $k$ face in the dual zonotope.
Incidence relations between the members of the complex are preserved in the dual zonotope.
Generally, a zonotope can be described as the image of an $N$-dimensional hypercube under a linear map whose columns are known as its generators \citep{mcmullen1971zonotopes}.
Each vertex of the zonotope thus is a weighted sum of its generators with coefficients belonging to $\{0,1\}$.

The dual zonotope $\mathcal Z$ of our polyhedral complex $\mathcal R$ has the training examples $\{\bar\x_i\}_{i=1}^N$ as its generators.
The vertices of $\mathcal Z$ have a one-to-one correspondence to the activation regions of our network.
When a vertex is expressed as a weighted sum over the generators, the coefficient $\{0,1\}$ of each generator equals its corresponding example's value in the region's activation pattern.
The right panel of \cref{fig:relu_zono_diagram} shows an example zonotope $\mathcal Z$ and its duality with the polyhedral complex $\mathcal R$.



These correspondences allow us to assign additional structure to the set of activation patterns $\mathcal A$ rather than just treating it as an unstructured set.
For example, the 1-skeleton of the zonotope $\mathcal Z$, which is the graph formed by its vertices and edges, provides a means of traversing the set of activation patterns.
Furthermore, we can directly make connections between the training dataset and the activation pattern structure by making use of the fact that $\mathcal Z$ is generated by the training examples.

\paragraph*{Multiple Hidden Units}

In the multiple hidden unit setting, i.e.\ $m>1$, note that parameter space becomes an $m$-fold Cartesian product of single-unit parameter spaces.
Furthermore, we are free to set the activation pattern for each unit independently of the others.
As the combinatorial structure for each hidden unit can be described using the zonotope $\mathcal Z$, the combinatorial structure for a multiple hidden unit network is described by the $m$-fold Cartesian product
$\mathcal Z^m = \prod_{i=1}^m \mathcal Z$.
As noted in \cref{sec:cart_prod_zono}, $\mathcal Z^m$ is also a zonotope.
Each vertex of $\mathcal Z^m$ corresponds to a product of $m$ vertices of $\mathcal Z$.
As in the single unit case, there is a one-to-one correspondence between the vertices of $\mathcal Z^m$ and the set of activation patterns $\mathcal A$.




\section{ReLU Optimization}

\subsection{NP-Hardness}
Given the additional structure we have imposed on the set of activation patterns in \cref{alg:arora}, it is natural to ask whether we can use it to develop a global optimization algorithm that is more efficient than a brute-force search over activation patterns.
Unfortunately, several works \citep{goel2020tight, froese2021computational} have demonstrated that global optimization of a shallow ReLU network is NP-hard.
%
Nevertheless, this does not preclude the existence of an efficient combinatorial optimizer given certain conditions on the input dataset.
Since the zonotope $\mathcal Z^m$ encapsulates the combinatorial structure of the optimization problem, we look to see if properties of $\mathcal Z^m$ can be related to the difficulty of combinatorial optimization.

Nontrivial affine dependencies between training examples influence the combinatorial structure of the $\mathcal Z^m$.
Since the reductions of NP-hard problems to ReLU optimization done in \citet{goel2020tight, froese2021computational} create datasets with such nontrivial dependencies, it is natural to ask whether it is NP-hard to optimize a shallow ReLU network over a training dataset in general position.

\subsubsection{Discontinuity of the Global Optimum}\label{sec:discont_global_opt}
If the global minimum of the loss is always continuous with respect to the input dataset, then the NP-hardness of optimization over arbitrary datasets in general position would follow from continuity since every set of points is arbitrarily close to a set in general position.
However, we can prove that such continuity holds unconditionally for ReLU optimization only in the case where the training dataset is in general position.
We give a sketch of the proof here along with some analysis of the failure cases that can happen when the data are not in general positions.
We provide a full proof in \cref{sec:stability_proof}.
\begin{theorem}\label{thm:stability_gp}
Suppose we are given a dataset $\mathcal D = \{(\x_i,y_i)\}_{i=1}^N$ in general position and some $m \in \N$.
Let $L^*(\mathcal D)$ denote the global minimum of the loss \eqref{eq:def_emp_loss} over the dataset $\mathcal D$ for a shallow ReLU network with $m$ units.
Given any $\epsilon > 0$, some $\delta > 0$ exists such $|L^*(\mathcal D) - L^*(\mathcal D_\epsilon)| < \delta$ for any dataset $\mathcal D_\epsilon = \{(\x'_i,y_i)\}_{i=1}^N$ satisfying $\|\x_i-\x'_i\|_2 \leq \epsilon$.
\end{theorem}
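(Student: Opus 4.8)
The plan is to exploit the decomposition $L^*(\mathcal D) = \min_{A \in \mathcal A} L^*_A(\mathcal D)$, where $L^*_A(\mathcal D)$ denotes the optimal value of the convex subproblem \eqref{eq:activation_region_constrained_opt} for a fixed activation pattern $A$, and to prove continuity in two stages: first that the index set $\mathcal A$ is unchanged under sufficiently small perturbations, and then that each $L^*_A$ varies continuously with the data. Since $\mathcal A$ is the vertex set of the zonotope $\mathcal Z^m$, whose face lattice is determined entirely by the oriented matroid of the generators $\{\bar\x_i\}$, i.e.\ by the signs of the determinants $\det[\bar\x_{i_1},\dots,\bar\x_{i_{d+1}}]$, the first stage reduces to showing these signs are locally constant. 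This is exactly where general position enters: it guarantees every such determinant is nonzero, so its sign is preserved for all perturbations with $\epsilon$ below some threshold $\epsilon_0$. Consequently $\mathcal A(\mathcal D_\epsilon) = \mathcal A(\mathcal D)$, and because a minimum over a fixed finite index set of continuous functions is continuous, it suffices to fix $A$ and prove that $L^*_A$ is continuous at $\mathcal D$. (When general position fails, a vanishing determinant can flip sign under an arbitrarily small perturbation, adding a new vertex to $\mathcal Z^m$ whose subproblem may have strictly smaller optimum; this is the zonotope-level picture of the discontinuity that the general-position hypothesis rules out.)

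For a fixed feasible $A$, the constraint set of \eqref{eq:activation_region_constrained_opt} is the closure of a full-dimensional polyhedral cone (a chamber of the arrangement) and hence has nonempty interior, on which every constraint $(2\a_i-1)\odot(W\bar\x_i)\ge 0$ holds strictly. To establish upper semicontinuity, given $\eta>0$ I would choose $W_0$ in this interior with $L_{\mathcal D}(W_0)\le L^*_A(\mathcal D)+\eta$, which is possible because the interior is dense in the closed cone and the objective (a composition involving the convex $\Tilde\ell$) is finite and continuous. Such a $W_0$ enjoys a strict margin $\mu>0$ in its constraints, so it remains feasible for $\mathcal D_\epsilon$ as soon as $\epsilon\,\|W_0\|<\mu$. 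Continuity of $\Tilde\ell$ jointly in $W$ and the data then gives $L^*_A(\mathcal D_\epsilon)\le L_{\mathcal D_\epsilon}(W_0)\to L_{\mathcal D}(W_0)\le L^*_A(\mathcal D)+\eta$, whence $\limsup_\epsilon L^*_A(\mathcal D_\epsilon)\le L^*_A(\mathcal D)$.

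For the reverse inequality I would run the symmetric argument with the roles of $\mathcal D$ and $\mathcal D_\epsilon$ exchanged, which is legitimate because both datasets are in general position with the same chirotope and the bound $\|\x_i-\x'_i\|_2\le\epsilon$ is symmetric. The main obstacle is that this symmetry is only clean if near-optimal points can be taken in a bounded region: a general convex loss need not attain its infimum over an unbounded cone, so the margin and the norm of the near-minimizer chosen for $\mathcal D_\epsilon$ could degrade as $\epsilon\to 0$, breaking the uniformity of the feasibility threshold. I expect this to be the delicate step. I would resolve it by truncating to a ball $\{\|W\|\le R\}$, on which the feasible set is compact and Berge's maximum theorem yields continuity of the truncated value $L^{*,R}_A$, and then showing $L^{*,R}_A\to L^*_A$ uniformly for data near $\mathcal D$; for losses that are coercive on the relevant subspace (e.g.\ MSE, where the Frank--Wolfe theorem guarantees attainment at a finite $W$) the truncation is unnecessary. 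Combining the per-pattern continuity over the fixed finite set $\mathcal A$ then gives continuity of $L^*$, and tracking the thresholds $\epsilon_0$, the margin condition, and the modulus of continuity of $\Tilde\ell$ produces the claimed $\delta$.
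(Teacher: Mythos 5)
Your skeleton coincides with the paper's: (i) show the set of feasible activation patterns is invariant under sufficiently small perturbations, (ii) show each per-pattern convex value $L^*_A$ is continuous in the data, (iii) take the minimum over a fixed finite index set. Your Stage 1 is correct but travels a different route: the paper (\cref{sec:stability_proof}) proves vertex-barcode invariance directly, by taking a supporting hyperplane of each vertex of $\mathcal Z$, translating it along with the perturbation, and checking that all other hypercube-vertex images stay on one side; you instead derive invariance from local constancy of the chirotope, i.e.\ the signs of the $(d+1)\times(d+1)$ minors of $\bar X$, which general position makes nonzero. That substitution is sound (the oriented matroid determines the covectors, hence the barcodes) and is arguably cleaner, at the cost of invoking oriented-matroid machinery the paper avoids.

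The genuine gap is in Stage 2, in the lower-semicontinuity half, and it sits exactly where the theorem's hypothesis must do its work. Your upper-semicontinuity margin argument is fine, but your proposed fix for the reverse direction --- truncate to $\{\|W\|\le R\}$, apply Berge's theorem, then ``show $L^{*,R}_A\to L^*_A$ uniformly for data near $\mathcal D$'' --- restates the difficulty rather than resolving it: that uniform convergence is equivalent to saying near-optimal solutions of the perturbed subproblems can be chosen with uniformly bounded norm, and nothing in your Stage 2 uses general position to secure this (your only appeal to it there is that the chirotope is shared, which is a Stage-1 fact). That no such argument can succeed without general position is shown by the paper's own example in \cref{sec:breakdown_ngp_zono_same}: five collinear points for which the all-active pattern is feasible both before and after the perturbation, yet $L^*_A(\mathcal D_\epsilon)=0$ while $L^*_A(\mathcal D)=0.1$, precisely because the perturbed near-minimizers have norm of order $1/\epsilon$. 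Your argument as written would ``prove'' continuity in that example too. The coercivity/Frank--Wolfe aside does not repair this: attainment for each fixed dataset does not give uniform boundedness across nearby datasets (the same collinear example under MSE has attained minimizers whose norms blow up as $\epsilon\to 0$, and the value still jumps). What is missing is the content the paper obtains by citing \citet{agrawal2019differentiable} together with the observation that every subset of a set in general position is itself in general position: under that hypothesis the rank of every active subset of examples is locally constant under perturbation, so the achievable predictions and the constraint geometry of each subproblem vary continuously, ruling out exactly the blow-up your truncation step needs to exclude.
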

\begin{proof}[Proof sketch]
For a small enough perturbation, we can prove that the datasets' zonotopes are combinatorially equivalent.
Hence their sets of feasible activation patterns will be exactly the same.
Using the fact that any subset of a set in general position is also in general position, we can then show that the constrained convex optimization problem associated with each vertex is continuous with respect to the input dataset.
Since the global minimum of the loss is just the minimum of the optimal loss for each vertex, its continuity follows from the fact that the composition of two continuous functions is continuous.
\end{proof}
When the dataset $\mathcal D$ is not in general position, there are two possible ways in which breaking of nontrivial affine dependencies between examples can cause the global minimum of the loss to become discontinuous.
The first is that the globally optimal vertex in the perturbed zonotope exists in the original zonotope, but its associated constrained convex optimization problem is discontinuous with respect to the dataset.
This can happen when there are nontrivial affine dependencies that get broken amongst the active examples in the vertex.
The second way is that the globally optimal vertex of the perturbed zonotope does not exist in the original zonotope.
Geometrically, we can think of such a vertex as resulting from the breakdown of a non-parallelepiped higher dimension face \citep{gover2014congruence}.
See \cref{sec:breakdown_ngp_examples} for examples of these phenomena.

\paragraph{Analysis of Reductions}

We can use this characterization of the instabilities of the global optimum to perturbations in the training data to analyze the reductions of NP-hard problems used in \citet{goel2020tight}.
We focus on the reduction of the NP-hard set cover problem to the optimization of a single bias-free ReLU.
In \cref{sec:appendix_unstable_data_example}, we provide an explicit example of an arbitrarily small perturbation that results in the global minimum of the loss being independent of the solution to the set cover problem.

To the best of our knowledge, existing reductions of NP-hard problems to ReLU optimization all create datasets that are not in general position \citep{goel2020tight, froese2021computational}.
Therefore, we present a modification of the set cover reduction that produces a dataset in general position.
See \cref{sec:appendix_reduction_general_position} for details of this modification along with a proof that it is indeed a reduction of the set cover problem.
We thus have the following statement.

\begin{theorem}\label{thm:np_hard_gp}
Optimizing a ReLU is NP-hard even when restricted to datasets in general position.
\end{theorem}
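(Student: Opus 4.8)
The plan is to keep the combinatorial skeleton of the set cover reduction of \citet{goel2020tight} but to replace its degenerate point configuration with a generic one, and then to re-prove correctness of the reduction \emph{directly} rather than by a continuity argument. As \cref{thm:stability_gp} and the subsequent discussion of the failure cases make clear, continuity is exactly the tool that is unavailable here: because the original dataset is not in general position, breaking its affine dependencies can make the optimal loss jump downward and destroy the encoding, so a small-perturbation-plus-continuity argument cannot port the hardness result.

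First I would recall the reduction of set cover to optimizing a single bias-free ReLU: from a universe and a family of sets one builds a dataset $\mathcal D$ and a threshold $\theta$ such that the global minimum $L^*(\mathcal D)$ satisfies $L^*(\mathcal D) \le \theta$ if and only if the instance admits a cover of size at most $t$. The key is to locate precisely where the construction forces points into degenerate position, typically by placing several examples on a common hyperplane or making them affinely dependent so that, at the optimum, their ReLU contributions cancel exactly; these exact cancellations are what give the optimal weight vector $\w$ its clean reading as a choice of sets. Next I would modify the construction to remove all nontrivial affine dependencies among the homogenized examples $\{\bar\x_i\}$, i.e.\ to make every $(d+1)$-subset linearly independent. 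Since the condition ``some $(d+1)$-minor of $X$ vanishes'' is a finite union of proper algebraic subvarieties, a generic displacement of the points lands in general position; the work is to choose displacements that (i) remain polynomially sized (rationals of polynomial bit-length), and (ii) preserve a quantitative separation between the coverable and non-coverable cases with an explicit margin $\gamma > 0$. Wherever the original reduction relied on exact cancellation, I would either fold a controlled, computable residual into $\theta$, or replace the single degenerate gadget by a small cluster of generic auxiliary points whose aggregate contribution reproduces the intended constraint up to an error bounded well below $\gamma$.

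With the generic dataset in hand, I would re-establish both directions of the reduction with quantitative slack. For the forward direction, from a cover of size at most $t$ I exhibit an explicit $\w$ whose loss is at most $\theta - \gamma$. For the reverse direction, from any $\w$ with loss at most $\theta$ I extract a cover of size at most $t$; I can no longer use exact constraints to pin down the activation pattern of $\w$, so instead I argue that any weight vector beating the threshold must, up to the controlled residual, satisfy the same coverage inequalities as in the degenerate case, whence rounding $\w$ yields a valid cover. Finally I would verify the standing hypotheses ($N > d+1$ and polynomial instance size) and conclude NP-hardness over datasets in general position.

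The hard part will be the reverse implication under genericity. In the degenerate reduction the affine dependencies act as hard equality constraints that force the optimum into a combinatorially meaningful form; once those dependencies are broken the constraints become merely approximate, and a ``cheating'' solution might exploit the extra freedom to undercut the threshold, which is precisely the discontinuous downward jump flagged after \cref{thm:stability_gp}. Overcoming this requires a margin argument that bounds, uniformly over all feasible activation patterns, how much the broken dependencies can lower the attainable loss, and then calibrates the perturbation size and the threshold so that this slack stays below $\gamma$, guaranteeing that $L^*(\mathcal D)$ still separates coverable from non-coverable instances.
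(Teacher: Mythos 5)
Your plan follows the same route as the paper's proof in \cref{sec:appendix_reduction_general_position}: keep the set-cover gadget of \citet{goel2020tight}, perturb it into general position, and re-prove correctness of the reduction directly, since (as you note) the discontinuity phenomenon of \cref{sec:discont_global_opt} rules out a continuity argument. The main execution difference is the choice of perturbation. The paper does not use a generic displacement of all points with a recalibrated threshold: it perturbs \emph{only} the examples $\x_u$, setting $\x'_u = \x_u - \etabf_u$ with $\etabf_u$ drawn i.i.d.\ uniformly from $[\delta_1,\delta_2]^d$, $0<\delta_1<\delta_2<\frac{1}{2d}$, and leaves $\x_1$, $\x_\gamma$, and all $\x_{T_i}$ exact (\cref{thm:gen_pos_np_hard}). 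Because this perturbation is one-sided and small, a cover solution still drives every preactivation $\w^T\x'_u$ strictly below zero, so the forward direction holds with loss \emph{exactly} $t\gamma^2/N$ (\cref{lem:set_cover_perturbed_satisfies}): the threshold is unchanged and none of the residual-folding or margin recalibration ($\theta - \gamma$) that you budget for is needed. Conversely, your insistence on perturbations of polynomial bit-length is a point where you are more careful than the paper, whose real-valued random noise formally yields only a randomized reduction.

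On the reverse (soundness) direction, which you rightly single out as the crux: your proposal leaves it as a described-but-unexecuted margin argument, and you should know that the paper's write-up does not actually close it either. If you unwind \cref{lem:og_loss_implies_perturbed_loss} and \cref{lem:perturbed_loss_implies_set_cover}, each is logically equivalent to ``a cover of size $t$ exists $\Rightarrow$ the perturbed loss is at most $t\gamma^2/N$'' --- the completeness direction restated --- while the needed implication ``perturbed loss $\le t\gamma^2/N \Rightarrow$ a cover of size at most $t$ exists'' is never established. So the quantitative argument you call for is genuinely the missing ingredient, in your proposal and in the paper alike. For the paper's specific perturbation it can be supplied by redoing the soundness argument of \citet{goel2020tight} while tracking the noise term: the noise enters a preactivation only as $-\w^T\etabf_u$ with $\etabf_u$ entrywise positive and total mass at most $d\delta_2 < \frac12$, so only \emph{positive} weights can exploit it, and the unperturbed gadget examples $\x_1, \x_\gamma, \x_{T_i}$ constrain those positive weights exactly as in the original proof; one then shows that any $\w$ beating the threshold must zero out (at cost $\gamma^2/N$ each, hence at most $t$ of them) a family of sets $T_i$ that covers $U$, because the bounded noise cannot cancel the $w_1 \approx 1$ contribution for an uncovered $u$. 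Writing out this estimate, uniformly over how the loss budget is spent, is what a complete proof of \cref{thm:np_hard_gp} requires --- under either your plan or the paper's.
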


\subsection{Polynomial Time Optimization via Overparameterization}\label{sec:overparam_polytime}
Even though ReLU network optimization is NP-hard in general, it can be shown that overparameterization allows for gradient descent to converge to the global minimum in polynomial time \citep{du2018gradient,zou2019improved,oymak2020toward,allen2019convergence}.
This is not a contradiction since optimization of overparameterized ReLU networks is a strict subset of the set of all ReLU optimization problems.

The general proof method of these works usually involves demonstrating that overparameterization results in activation patterns not changing much throughout training.
This allows gradient descent to effectively bypass the combinatorial search of the outer loop in \cref{alg:arora}.
The remaining optimization problem can then be shown to be similar to the constrained convex optimization problem \eqref{eq:activation_region_constrained_opt} by assuming that the second layer is frozen.
%
Using the zonotope formalism, we can interpret these results as saying that a sufficiently large number of hidden units $m$ guarantees with high probability that a randomly chosen vertex corresponds to a region of parameter space containing a global minimum of the loss.
Parameter initialization selects the random vertex in practice.

This can be justified theoretically through a connection to random feature models.
Here we assume that the first layer is frozen, and the second layer forms a linear model over the random first layer features.
As the number of units $m$ increases past the number of training examples $N$, the set of first layer activations can become linearly independent.
The probability of this approaches 1 as $m \to \infty$.
Whether all of the parameters within an activation region produce linearly dependent activations can be shown to depend solely on its activation pattern when the dataset is in general position.

To test this, we ran experiments comparing batch gradient descent to solving \eqref{eq:activation_region_constrained_opt} for a randomly chosen vertex on some toy datasets.
We created synthetic datasets by first choosing the input dimension $d$ and a positive integer $\mgen$.
To get the training examples, we sampled $N = (d + 1) \mgen$ points in $\R^d$ i.i.d.\ from the standard Gaussian distribution.
We then sampled the weights of a shallow ReLU network with $\mgen$ units i.i.d.\ from the standard Gaussian distribution.
We used this network to create the labels for our synthetic dataset.
See \cref{sec:synth_data_gen} for details on the data generation process.

We also created toy binary classification datasets from MNIST \citep{lecun2010mnist} and Fashion MNIST \citep{xiao2017fashion} by choosing two classes, 5/9 and coat/pullover, respectively, to differentiate.
We used the first $d \in \{8, 16\}$ components of the PCA whitened data and selected $N \in \{350, 700\}$ examples for our training sets. 
See \cref{sec:real_world_data_exps} for details.

\begin{figure}[t]
\begin{center}
\centerline{\includegraphics[width=0.45\linewidth]{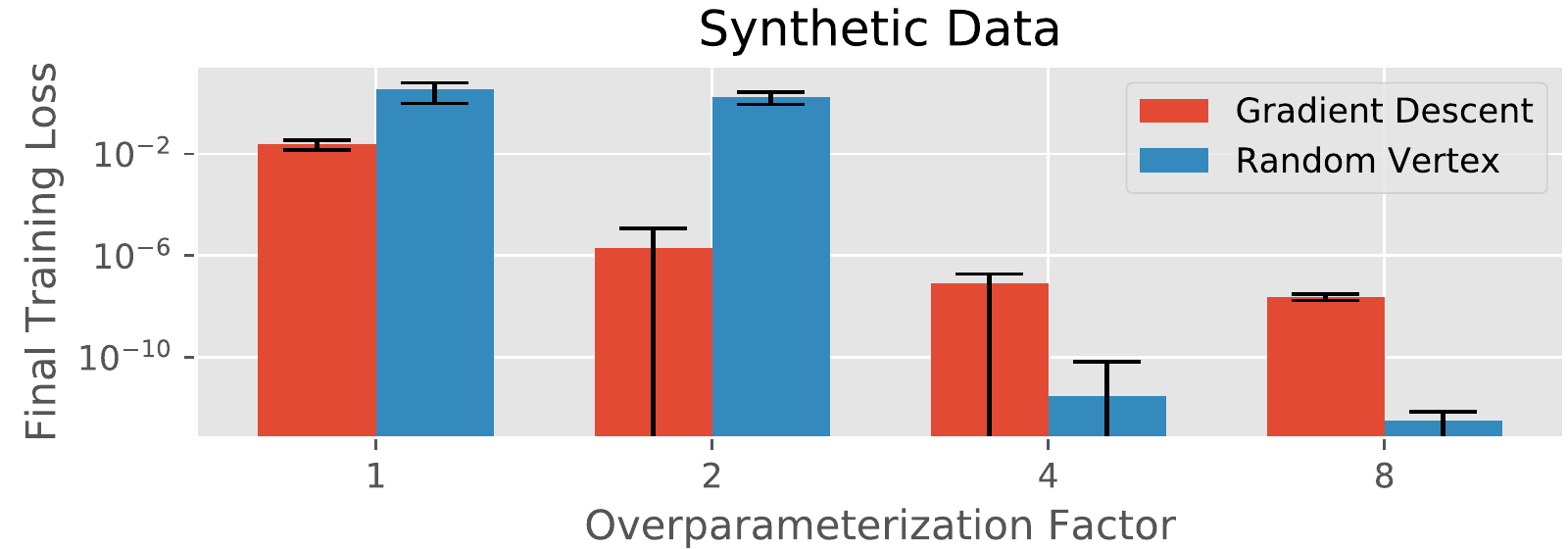}
\includegraphics[width=0.5\linewidth]{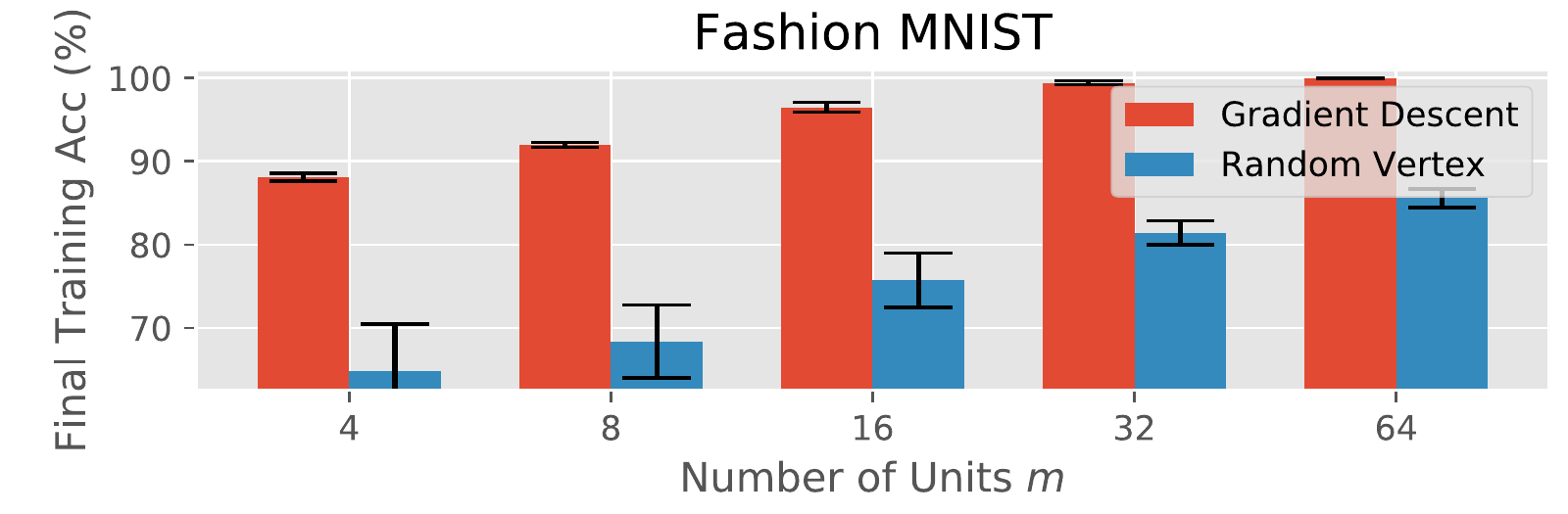}}
\vspace{-0.5em}
\caption{Comparison between gradient descent and optimization with a fixed random activation pattern.
Left: Results for MSE on synthetic data for $d=8$ and $\mgen = 8$. The overparameterization factor times $\mgen$ equals the number of units in the trained network.
Right: Results for accuracy on Fashion MNIST coat/pullover binary classification for $d=16$ and $N=700$.}
\label{fig:gd_vs_rv}
\end{center}
\vskip -0.2in
\end{figure}

We present some of our results in \cref{fig:gd_vs_rv}.
See \cref{sec:exp_details} for details of the training procedures and for results on more $d,\mgen$ and $d,N$ pairs.
On synthetic data, we see that the random vertex method finds a good solution for overparameterization factors of 4 and up.
However, gradient descent tends to arrive at reasonably good solutions for lower levels of overparameterization while the random vertex method fails.
This was a general trend that we observed across different $d,\mgen$ pairs on the synthetic datasets and $d,N$ pairs on the binary classification datasets.
Note that the Fashion MNIST networks represented in \cref{fig:gd_vs_rv} were relatively underparameterized with the maximal size of 64 units being overparameterized by only a factor of about 1.5.

This demonstrates that gradient descent can perform some aspects of the combinatorial search over zonotope vertices.
We hypothesize that the gradient tends to be smaller within activation regions with a good optimum and thus gradient descent is more likely to stay within a good activation region.
Conversely, the larger gradients within activation regions with poor optima make it more likely that a gradient descent step will move the parameters out of those regions.
We can thus think of gradient descent as performing a pseudo-annealing process over the vertices of the zonotope since the likelihood of moving from one vertex to another decreases as the parameters settle into better activation regions.


\subsubsection{Tighter Bounds}\label{sec:tighter_bounds}

We now introduce a novel vertex selection scheme that runs in polynomial time and requires minimal overparameterization.
Suppose $\mathcal D = \{(\x_i,y_i)\}_{i=1}^N$ is a dataset in general position.
Assume that the examples are ordered by the value of their last coordinate, which we suppose is unique WLOG (i.e.\ $\e_d^T\x_i < \e_d^T\x_j$ for $i < j$).
If not provided in this format, this can be accomplished in $O(N\log N)$ time.
We now split the dataset into $\lceil \frac{N}{d+1} \rceil$ chunks containing at most $d+1$ examples.
We write each chunk as $\mathcal D_k = \{(\x_i,y_i)\}_{i=(k - 1)(d+1)+1}^{\min(N, k(d+1))}$.
Since each $\mathcal D_k$ contains a contiguous chunk of examples sorted along an axis in coordinate space, we see that we can always find a hyperplane separating $\mathcal D_k$ and $\mathcal D_{k'}$ for $k\neq k'$.
For each $k = 1,\dotsc, \lceil \frac{N}{d+1} \rceil$, we add two units to our ReLU network and assign them the activation pattern of 0 for examples belong to a $\mathcal D_{k'}$ with $k'<k$ and 1 for the remaining examples.
One of the units will be multiplied by +1 in the second layer while the other will be multiplied by -1.
Hence the network contains a total of $2 \lceil \frac{N}{d+1} \rceil$ hidden units.
We prove in \cref{sec:tighter_bounds_proof} that a set of weights with that activation pattern exists such that the output of the network on training examples exactly matches their labels.
The key idea in the proof is that we can sequentially fit the examples in the $k$-th chunk without undoing our progress in fitting the chunks before it.

\begin{theorem}\label{thm:tighter_op_bounds}
Given a dataset in $\R^d$ containing $N$ examples in general position, a shallow ReLU network containing $2 \lceil \frac{N}{d+1} \rceil$ hidden units can be found in polynomial time exactly fitting the dataset.
\end{theorem}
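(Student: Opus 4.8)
The construction in the preceding paragraphs already fixes the architecture and the target activation pattern, so the plan is to show that first-layer weights realizing that pattern exist and can be chosen to interpolate the labels. Let $K = \lceil \frac{N}{d+1}\rceil$ and, for the two hidden units associated with chunk $k$, write their weight vectors as $\w_k^+$ (entering the second layer with coefficient $+1$) and $\w_k^-$ (entering with coefficient $-1$), and set $\mathbf{u}_k = \w_k^+ - \w_k^-$. The first observation is that the prescribed pattern decouples the chunks into a triangular system: once the pattern holds, the pair for chunk $k$ contributes the affine map $\mathbf{u}_k^T\bar\x_i$ to every example in a chunk $\mathcal D_{k'}$ with $k'\ge k$ and contributes exactly $0$ to every earlier example. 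Hence the network output on an example $\x_i \in \mathcal D_j$ is $\big(\sum_{k\le j}\mathbf{u}_k\big)^T\bar\x_i$, which depends only on $\mathbf{u}_1,\dots,\mathbf{u}_j$.

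This triangular structure is what lets us fit the data sequentially. I would determine $\mathbf{u}_1, \mathbf{u}_2, \dots, \mathbf{u}_K$ in order: having fixed $\mathbf{u}_1,\dots,\mathbf{u}_{j-1}$, choose $\mathbf{u}_j$ to solve the linear system $\mathbf{u}_j^T\bar\x_i = y_i - \big(\sum_{k<j}\mathbf{u}_k\big)^T\bar\x_i$ over the (at most $d+1$) examples $\x_i \in \mathcal D_j$. This system is solvable because each chunk contains at most $d+1$ points and, by general position, the corresponding homogenized vectors $\{\bar\x_i\}_{i\in\mathcal D_j}$ are linearly independent (affine independence of at most $d+1$ points is equivalent to linear independence of their homogenizations), so the coefficient matrix has full row rank. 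Because $\mathbf{u}_j$ influences only chunks $\ge j$, fixing it interpolates $\mathcal D_j$ without disturbing the already-fitted chunks $\mathcal D_1,\dots,\mathcal D_{j-1}$, so after the $K$-th step every label is matched.

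The step I expect to be the main obstacle is converting each prescribed difference $\mathbf{u}_k$ into a genuine pair $(\w_k^+,\w_k^-)$ that actually realizes the on/off pattern, since the fitting only pins down the difference. The idea is to shift both vectors by a large multiple of a separating functional. Because the examples are sorted by their last coordinate and chunks are contiguous, there is an $\mathbf{h}_k\in\R^{d+1}$ (a threshold on the last coordinate in homogeneous coordinates, placed strictly between chunk $k-1$ and chunk $k$) with $\mathbf{h}_k^T\bar\x_i<0$ on chunks before $k$ and $\mathbf{h}_k^T\bar\x_i>0$ on chunk $k$ and later. Setting $\w_k^- = C\,\mathbf{h}_k$ and $\w_k^+ = \mathbf{u}_k + C\,\mathbf{h}_k$ preserves the difference $\mathbf{u}_k$, and for $C$ large enough the sign of $C\,\mathbf{h}_k^T\bar\x_i$ dominates $\mathbf{u}_k^T\bar\x_i$, so both pre-activations are strictly negative on earlier chunks (pattern $0$) and strictly positive on chunk $k$ and beyond (pattern $1$); since only finitely many examples impose constraints, a single uniform $C$ works for all units. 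Finally I would account for the running time: sorting costs $O(N\log N)$, and the procedure performs $K$ linear solves in dimension $\le d+1$ together with the elementary computation of the thresholds and of $C$, all polynomial in $N$ and $d$, yielding the claimed interpolating network with $2K = 2\lceil\frac{N}{d+1}\rceil$ hidden units in polynomial time.
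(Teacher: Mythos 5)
Your proposal is correct and follows essentially the same route as the paper's proof: both fit the chunks sequentially via per-chunk linear regressions on residual labels (your triangular system is exactly the paper's relabel-by-residuals induction), and both realize the prescribed activation pattern by shifting each pair of weight vectors by a large multiple of a separating functional built from the last coordinate. The only cosmetic difference is that the paper picks the shift magnitude chunk-by-chunk inside its key lemma, whereas you determine all differences $\mathbf{u}_k$ first and then choose a single uniform constant $C$ at the end.
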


To the best of our knowledge, this is the tightest known bound on the amount of overparameterization needed to find the global optimum of a ReLU network in polynomial time.
A simple argument comparing the number of unknowns and the number of equations demonstrates that we need at least $\frac{N}{d+1}$ hidden units to exactly fit an arbitrary dataset with a shallow ReLU network.
Hence our method uses only about twice as many hidden units as is necessary to fit the data.
However, we emphasize that this ReLU optimization scheme is primarily of theoretical interest since we find in practice that the resulting ReLU network tends to be a very ill-conditioned function.


\subsection{Relevance to Optimization in Practice}
Practically all optimization of ReLU networks in practice uses some variant of gradient descent with an overparameterized network.
As the degree of overparameterization goes down, gradient descent begins to arrive at increasingly suboptimal solutions \citep{nakkiran2021deep}.

%
In \cref{sec:overparam_polytime}, we hypothesized how gradient descent can find activation regions containing good optima.
However, the gradient of the loss is inherently a local property in parameter space while the space's decomposition into activation regions is inherently global.
Boundaries between regions correspond to discontinuities in the gradient of the loss.
We hypothesize that these properties lead to little direct information about the optimization problem being used to inform gradient descent's traversal over zonotope vertices.
Hence we suspect that algorithms that explicitly traverse zonotope vertices using some loss-based criteria can outperform gradient descent in the underparameterized- to mildly-overparameterized regimes.



\subsubsection{Difficulty of Combinatorial Search}\label{sec:diff_comb_search}
Unless $P=NP$, we are unlikely to find an efficient algorithm to perform the combinatorial search in \cref{alg:arora} for arbitrary datasets \citep{goel2020tight}.
However, this does not preclude the existence of heuristics that tend to work well on problems encountered in practice.
We investigated this by using a greedy local search (GLS) over the graph formed by the zonotope's 1-skeleton.
We start by selecting a vertex at random and find its corresponding optimal loss by solving a convex program.
We iterate over its neighboring vertices and compute their optimal losses as well.
We then move to the neighboring vertex with the lowest loss and repeat the process until we arrive at a vertex with lower loss than its neighbors.
We then take that vertex's optimal parameters as our approximation to the global minimization problem.
This algorithm is defined in detail in \cref{alg:local_search}.

We also experimented with some additional heuristics that help the GLS converge faster by reducing the number of convex problems solved at each step.
For example, we can greedily move to the first neighboring vertex encountered with a lower loss, which significantly decreases the time per step in the early stages of training. We can further improve this by using geometric information about a solution's relative location in its activation region to try certain vertices first.
We call the algorithm with these heuristics modified greedy local search (mGLS) and define it in detail in \cref{sec:mgls}.


\begin{figure}[t]
\begin{minipage}{0.45\textwidth}
\begin{algorithm}[H]
\hsize=\textwidth 
    \caption{Greedy Local Search (GLS) Heuristic}\label{alg:local_search}
\begin{algorithmic}
  \STATE {\bfseries Input:} data $\mathcal D = \{\x_i,y_i\}_{i=1}^N$, 2nd layer $\v \in \R^{m + 1}$, max steps $T \in \N$
  \STATE $A_0 \in \operatorname{vert}(\mathcal Z^m)$ 
  \FOR{$t \in \{0, \dotsc, T\}$}
        \STATE $A_{t+1} \gets A_t$
        \FOR{$A' \in \operatorname{neighbors}(A_t)$}
            \IF{$\mathcal L^*(A'; \mathcal D) < \mathcal L^*(A_{t+1}; \mathcal D)$}
                \STATE $A_{t+1} \gets A'$
            \ENDIF
        \ENDFOR
        \IF{$A_{t+1} = A_t$}
            \STATE \textbf{return} $A_t$
        \ENDIF
  \ENDFOR
\STATE \textbf{return} $A_T$
\end{algorithmic}
\end{algorithm}
\end{minipage}\hfill
\begin{minipage}{0.5\textwidth}
\begin{center}
\centerline{\includegraphics[width=\textwidth]{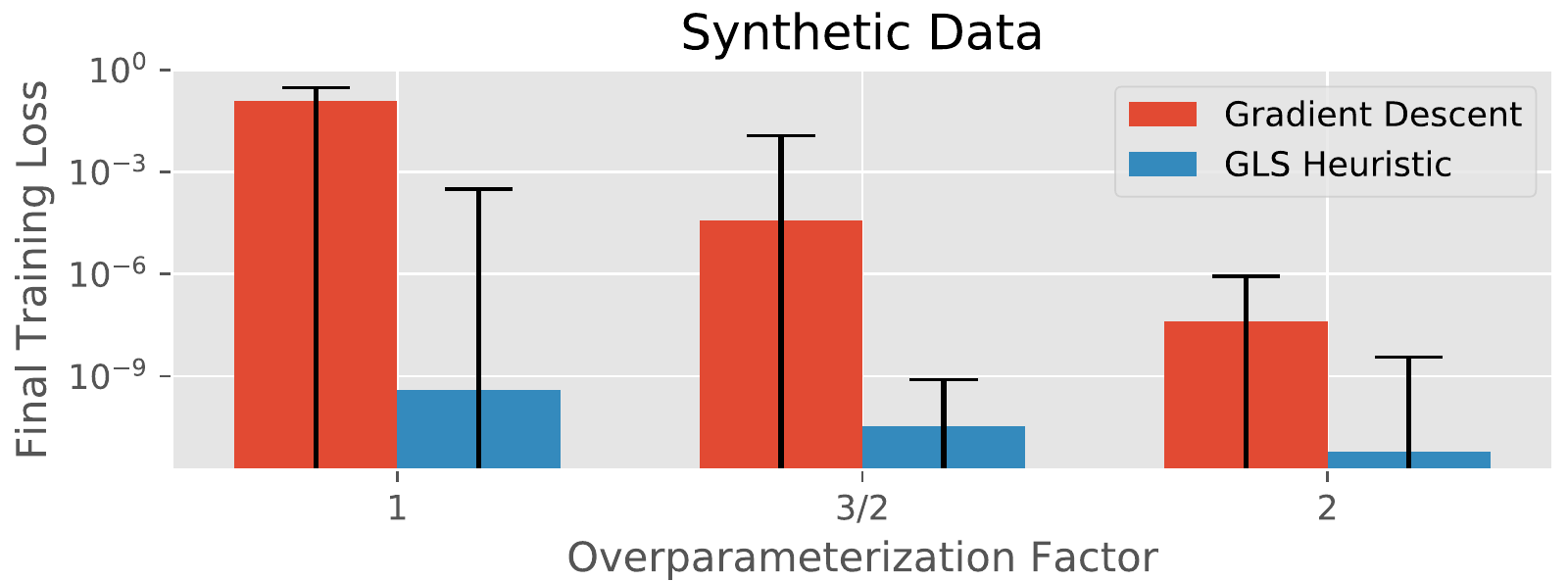}}
\centerline{\includegraphics[width=\textwidth]{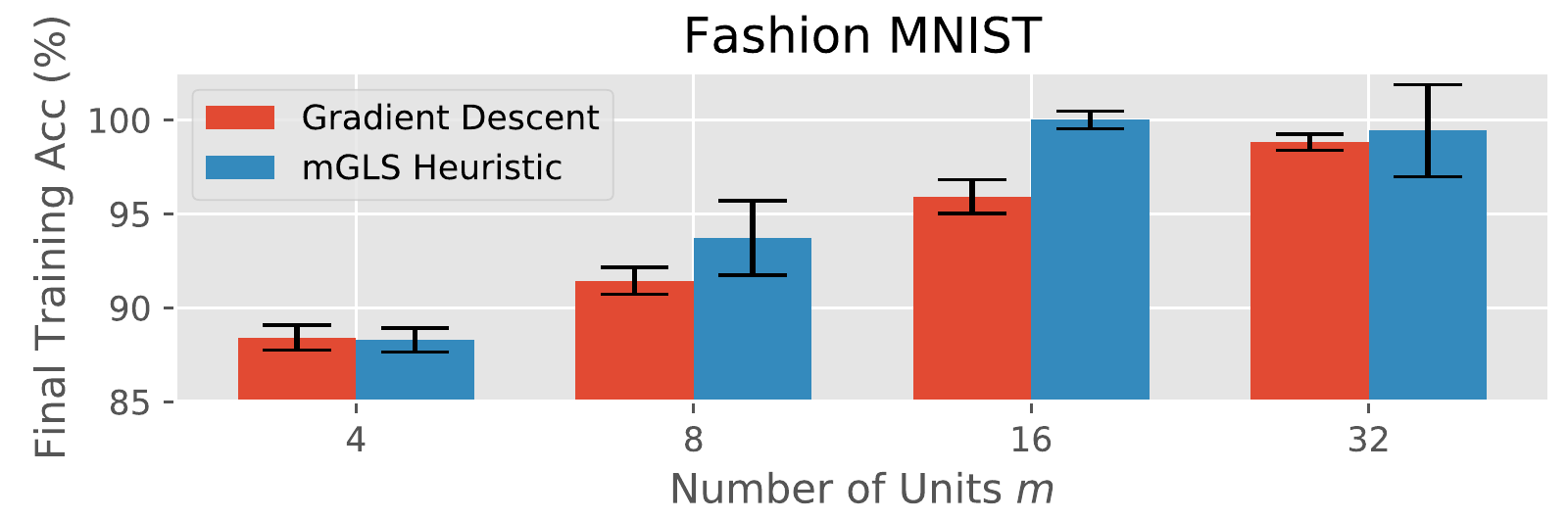}}
\vspace{-0.5em}
\caption{
Comparison between gradient descent and our GLS heuristics.
Top: Results for MSE on synthetic data for $d=4$ and $\mgen = 2$. The overparameterization factor times $\mgen$ equals the number of units in the trained network.
Bottom: Results for accuracy on Fashion MNIST coat/pullover binary classification for $d=8$ and $N=350$.}
\label{fig:gd_vs_gls}
\end{center}
\vskip -0.2in
\end{minipage}
\end{figure}

Note that these heuristics are not guaranteed to return a global minimizer of the loss.
Furthermore, there are an exponential number of vertices in the zonotope, so there are no immediate guarantees of them taking less than exponential time to run.
However, each step takes polynomial time since each vertex has $O(mN)$ neighbors, so each step solves a polynomial number of convex optimization problems.

We ran experiments comparing these heuristics to gradient descent on toy datasets generated in the same way as in \cref{sec:overparam_polytime}.
We used GLS on the synthetic data and mGLS on the MNIST and Fashion MNIST derived data.
We present some of our results in \cref{fig:gd_vs_gls}.
See \cref{sec:exp_details} for details of the training procedures and for results for more values of $d,\mgen$ and $d,N$ pairs.
On synthetic data, the GLS heuristics significantly outperformed gradient descent.
On binary classification tasks, mGLS outperformed gradient descent for networks with moderate levels of underparameterization and performed similarly otherwise.
We observed a similar trend across the rest of the $d,\mgen$ and $d,N$ pairs.
These results support our hypothesis that gradient descent is suboptimal in the combinatorial search over activation patterns.
Furthermore, they suggest that this combinatorial optimization might tend to be tractable in practice.

\section{Related Work}

Some of the concepts in this work also arise in \citet{zhang2018tropical}.
A key difference is that they analyze the activation regions in input space given a ReLU network with fixed parameters.
We can, in fact, use their tropical geometric approach to derive our zonotope formalism for a single ReLU unit.
To do so, the roles of the weights and data must be swapped; we instead use a fixed data matrix and varying vector of weights while they use a fixed weight matrix and varying vector of data.
Our use of a zonotope generated by the training examples, however, is novel.
%
\Citet{misiakos2021neural} show that the approximation error between two shallow ReLU networks depends on the the Hausdorff distance between the zonotopes generated by the each network's units.
\Citet{bach2017breaking} also use a Hausdorff distance between zonotopes in the context of neural network optimization.

\Citet{goel2020tight} provide proofs of the NP-hardness of optimization of shallow ReLU networks and the hardness of even finding an approximate solution.
\Citet{froese2021computational} extend these results and show that the brute force search in \citet{arora2016understanding} cannot be avoided in the worst case.
\Citet{du2018gradient} was one of the first works to prove that overparameterization in shallow ReLU networks allows gradient descent to converge to a global optimum in polynomial time.
Their bound of $\Omega(N^6)$ on the number of hidden units needed for convergence was improved upon by subsequent work \citep{ji2019polylogarithmic, daniely2019neural}.
For example, \citet{oymak2020toward} proved a bound of $\Omega(N^2/d)$.
%

%
\Citet{pilanci2020neural} and \citet{wang2021hidden} represent global optimization of shallow ReLU networks with $\ell_2$ regularization using a convex optimization problem that operates simultaneously over \textit{all} activation patterns for a single unit.
Multiple units are handled by summing over the activations with different activation patterns.
This leads to exponential complexity in the data dimension $d$ but avoids exponential complexity in the number of units $m$.
\Citet{dey2020approximation} provide an example of a heuristic algorithm that searches over activation patterns for a single ReLU unit.
Their algorithm operates on the principle that examples with large positive labels are more likely to belong to the active set in good solutions.

\section{Conclusion}

We introduced a novel characterization of the combinatorial structure of activation patterns implicit in the optimization of shallow ReLU networks.
We showed that it can be described as a Cartesian product of zonotopes generated by the training examples.
%
We used this zonotope formalism to explore aspects of the optimization of shallow ReLU networks.
It provides a natural way to describe instabilities of the global minimum to perturbations of the dataset.
We then related this to work on the NP-hardness of global ReLU optimization.
In particular, we demonstrated that this optimization problem is still NP-hard even when restricted to datasets in general position, which is commonly assumed of data in practice.

We then explored how combinatorial considerations play into the relationship between overparameterization and polynomial-time optimization of shallow ReLU networks.
Namely we interpret known results for gradient descent as stating that a randomly chosen zonotope vertex will be close to one whose activation region contains a good local optimum.
We then provide empirical evidence that sufficient overparameterization makes it highly likely that a randomly chosen vertex has a good local optimum.
We also provide a polynomial-time algorithm that can find a vertex containing the global optimum using approximately twice the minimum number of hidden units needed to fit the dataset exactly.
Finally, we provide a GLS heuristic over zonotope vertices that outperforms gradient descent on some toy problems.
In future work we plan to theoretically and empirically explore heuristics and algorithms that perform well on real-world datasets.
We hope to analyze how vertex choice impacts generalization.
Further insights might be derived by exploring the connections of hyperplane arrangements to tropical geometry and oriented matroids \citep{stanley2004introduction, oxley2006matroid, maclagan2015introduction}.
One caveat of our theory is that it applies to only a shallow ReLU network.
However, the concepts of activation patterns are still meaningful for deep ReLU networks but require real algebraic geometry for analysis \citep{basu2014algorithms, bochnak2013real}.
We hope that further research along these avenues will deepen our understanding of neural network training and enable improvements to training in practice.

\bibliography{refs}
\bibliographystyle{icml2021}

\clearpage
\appendix
\setcounter{table}{0}
\renewcommand{\thetable}{A\arabic{table}}

\section{Mathematical Background}\label{sec:math_background}


\subsection{Oriented Hyperplane Arrangements}
Any $\bar \x \in\R^{d+1}$ defines a tripartite division of $\R^{d+1}$ given by
\begin{equation}\label{eq:def_half_space_plane}
    H^\alpha = \{ \w \in \R^{d + 1} \,\mid\, \sign(\w^T\bar\x) = \alpha \},
\end{equation}
where $\alpha \in \{-1, 0, +1\}$.
The set $H^0$ is a hyperplane while $H^+, H^-$ are the positive and negative open half-spaces, respectively.
We call their closures $\bar H^{\pm} = H^0 \cup H^{\pm}$ the positive/negative closed half-spaces.
For consistency, we say $\bar H^0 = H^0$.

We are provided with a finite set of vectors $\{\bar\x_i\}_{i=1}^N \subseteq \R^{d+1}$.
We interpret these vectors as the training examples in \cref{sec:comb_struct}, but for now consider them to be arbitrary vectors.
Let $H^\alpha_i$ denote the subsets given by \eqref{eq:def_half_space_plane} for the $i$-th example.
The union of the hyperplanes $\bigcup_{i=1}^N H_i^0$ separates chambers of $\R^{d+1}$ into a finite number of disjoint cells.
Each cell can be described as the intersection of open half-spaces and thus can be indexed by a vector of sign patterns reflecting whether the positive or negative half-space is used.
The structure induced by these hyperplanes along with their orientation information creates what is known as an oriented hyperplane arrangement \citep{richter20176}.

\subsection{Polyhedral Complexes}
We can also describe this arrangement through the notion of a polyhedral complex \citep{ziegler2012lectures}.
We first provide some definitions that will be useful.
A polyhedral set is any set that is the intersection of a finite number of closed half-spaces.
A polytope is a bounded polyhedral set.
Any hyperplane intersecting a polyhedral set will either divide it into two polyhedral sets or only intersect it on its boundary.
In the latter case, we call such a hyperplane a supporting hyperplane.
A face of a polyhedral set is defined as its intersection with a supporting hyperplane.
The dimension of a face is the dimension of its affine span.
If the dimension of a face is $k$, we call it a $k$-face.
We call 0-faces vertices and 1-faces edges.
By convention, the empty set is a face of any polyhedral set.
The 1-skeleton of a polyhedral set is the graph formed by its vertices and edges.

A polyhedral complex $\mathcal K$ is a finite set of polyhedral sets satisfying
\begin{enumerate}[noitemsep,topsep=0pt,parsep=0pt,partopsep=0pt]
    \item If $P \in \mathcal K$ and $F$ is a face of $P$, then $F \in \mathcal K$.
    \item If $P_1,P_2\in\mathcal K$, then their intersection $P_1 \cap P_2$ is a face of both $P_1, P_2$.
\end{enumerate}
We call a codimension 0 member of $\mathcal K$ a chamber of the polyhedral complex.
The support of a polyhedral complex is the union of its polyhedral sets.
If a polyhedral complex's support equals the entire space, then we call it a polyhedral decomposition of the space.
The ``is a face of'' relation induces a poset structure on the members of a polyhedral complex, which we call its face poset.
This can be extended further to a meet-semilattice with the meet operation being given by set intersection.
We call this the face semilattice.

Going back to the case of an oriented hyperplane arrangement, let $\a\in\{-1,0,+1\}^N$ be some sign pattern.
Now let us write
\begin{equation}\label{eq:def_sign_pattern_region}
    R^\a = \bigcap_{i=1}^N \bar H_i^{a_i}.
\end{equation}
Since our hyperplanes are all linear (i.e.\ non-affine), we always have $\0 \in R^\a$.
If $R^\a = \{\0\}$, we say that $R^\a$ is null.
Define $\mathcal R$ to be the set of all $R^\a$.
Then $\mathcal R$ is a polyhedral complex, which we prove in \cref{sec:R_poly_complex_proof}.

Every chamber of $\mathcal R$ corresponds to a non-null $R^\a$ with $\a\in\{-1, +1\}^N$.
If the hyperplanes are general positions, this correspondence is one-to-one.
The face semilattice of $\mathcal R$ provides information about how its chambers are arranged in space.
For example, let $M \in \mathcal R$ be the meet of two chambers.
If $M \neq \{\0\}$, then those chambers are neighbors.
Then $\dim M \in \{1, \dotsc, d\}$ and the sign patterns of the chambers differ by at least $d + 1 - \dim M$ sign flips, with equality always holding the hyperplanes are in general positions.

\subsection{Dual Zonotopes}
It turns out that we can describe the incidence structure of the polyhedral complex $\mathcal R$ nicely with a single polytope called its dual zonotope  $\mathcal Z$ \citep{ziegler2012lectures}.
A zonotope is any polytope that can be expressed as the Minkowski sum of a finite set of line segments called its generators \citep{mcmullen1971zonotopes}.
In the case of the dual zonotope of $\mathcal R$, these generators are the line segments $\{[\0,\bar\x_i]\}_{i=1}^N$.
We can thus write 
\begin{equation}\label{eq:def_assoc_zono}
    \mathcal Z = \left\{\sum_{i=1}^N \lambda_i \bar\x_i \mid \lambda_i \in [0, 1]\right\}.
\end{equation}
When the generators are in general positions, each $k$-face of $\mathcal Z$ is a $k$-dimension parallelepiped.
Nontrivial linear dependencies between generators, however, lead to $k$-faces that are the union of multiple $k$-dimension parallelepipeds lying in the same $k$-dimension affine subspace.

The duality between $\mathcal Z$ and $\mathcal R$ allows us to associate members of $\mathcal R$ with faces of $\mathcal Z$.
Each $k$-face of $\mathcal Z$ corresponds to a codimension $k$ member of $\mathcal R$.
Notably, the vertices of $\mathcal{Z}$, denoted by $\operatorname{vert}(\mathcal Z)$, correspond to the chambers of $\mathcal R$.
Relationships between members of $\mathcal R$ carry over to faces of $\mathcal Z$.
For example, two neighboring chambers of $\mathcal R$ whose sign patterns differ by a single flipped sign will correspond to two vertices connected by an edge in $\mathcal Z$.

We now describe how to make this correspondence explicit.
Let $\v\in\operatorname{vert}(\mathcal Z)$ be a vertex.
It can be shown that $\v$ has a unique representation as $\sum_{i=1}^N \lambda_i \bar\x_i$ with every $\lambda_i \in \{0, 1\}$.
We call the vector $(\lambda_1,\dotsc,\lambda_N)$ the barcode of the vertex.
We will often treat a vertex interchangeably with its barcode in this paper with difference being clear by context.
Let $\a$ be the sign pattern of the chamber corresponding to $\v$.
Then $a_i = -1$ if $\lambda_i = 0$ and $a_i = +1$ if $\lambda_i = 1$ for $i=1,\dotsc,N$.

Now suppose two vertices $\v_1,\v_2\in\operatorname{vert}(\mathcal Z)$ are connected by an edge, and that the hyperplanes of $\mathcal R$ are in general positions.
Then there exists a single $i^*$ such that, WLOG, $\v_2 = \v_1 + \bar\x_{i^*}$.
The sign pattern of the member of $\mathcal R$ corresponding to the edge can then be found by finding the sign pattern for $\v_1$ and changing its $i^*$-th entry to be 0.

\subsubsection{Cartesian Power of Zonotopes}\label{sec:cart_prod_zono}
Let us consider an $m$-ary Cartesian power of a zonotope 
$\mathcal Z^m = \prod_{i=1}^m \mathcal Z$.
We can see that $\mathcal Z^m$ is also a zonotope and is generated by line-segments from the origin to members of 
$\bigcup_{i=1}^m \bigcup_{j=1}^N \{\e_i\bar\x_j^T\}$,
where $\e_i \in \R^m$ is the $i$-th standard coordinate vector.
Each $k$-face of $\mathcal Z^m$ is the Cartesian product of a set of $\{k_1,\dotsc,k_m\}$-faces of $\mathcal Z$ where $k = k_1 + \dotsi + k_m$.
Notably, each vertex of $\mathcal Z^m$ corresponds to a product of $m$ vertices of $\mathcal Z$.
Edges of $\mathcal Z^m$ correspond to the product of a single edge of $\mathcal Z$ with $m-1$ vertices.

\section{Proofs for \Cref{sec:math_background}}

\subsection{Proof that $\mathcal R$ is a Polyhedral Complex}\label{sec:R_poly_complex_proof}

Recall that a polyhedral complex $\mathcal K$ is a finite set of polyhedral sets satisfying
\begin{enumerate}
    \item If $P \in \mathcal K$ and $F$ is a face of $P$, then $F \in \mathcal K$.
    \item If $P_1,P_2\in\mathcal K$, then their intersection $P_1 \cap P_2$ is a face of both $P_1, P_2$.
\end{enumerate}

Recall that we have defined $\mathcal R$ as
\begin{equation}
    \mathcal R = \left\{ R^\a \subseteq \R^{d+1} \mid \a\in\{-1, 0, +1\}^N\right\},
\end{equation}
where $R^\a$ is given by \eqref{eq:def_sign_pattern_region}.
Note that generally $|\mathcal R| \leq 3^N$ since multiple $R^\a$ can equal $\{\0\}$.

It is easy to see that every $R^\a \in \mathcal R$ is a polyhedral set since it can be can defined as the intersection of finitely many closed half-spaces.
When $a_i = 0$, its corresponding hyperplane in  \eqref{eq:def_sign_pattern_region} is equivalent to the intersection of its positive and negative closed half-spaces.

We now prove the first condition for $\mathcal R$ being a polyhedral complex.
Suppose $R^\a \in \mathcal R$ and suppose $F$ is a face of $R^\a$.
Hence $F$ is the intersection of $R^\a$ with a supporting hyperplane.
It is straightforward to see that any face of $R^\a$ can be represented by a $R^\b$ where $b_i = a_i$ for all $i \in [N] \setminus I$ and $b_i=0, a_i=\pm 1$ for $i\in I \subseteq [N]$.
Hence the face $F = R^\b \in \mathcal R$.

We now prove the second condition for $\mathcal R$ being a polyhedral complex.
Let $R^\a,R^\b \in \mathcal R$.
From \eqref{eq:def_sign_pattern_region}, we see that
\begin{equation}
    R^\a \cap R^\b = \bigcap_{i=1}^N \bar H_i^{a_i} \cap \bar H_i^{b_i}.
\end{equation}
We see that $H_i^{a_i} \cap \bar H_i^{b_i} = H_i^{\pm 1}$ if $H_i^{a_i} = H_i^{b_i} = H_i^{\pm 1}$, and that $H_i^{a_i} \cap \bar H_i^{b_i} = H_i^0$ otherwise.
It is easy to see that this intersection can be represented as the intersection of a supporting hyperplane with either $R^\a$ or $R^\b$.
Hence their intersection is a mutual face.

\section{Proof of \Cref{thm:stability_gp}}\label{sec:stability_proof}

Suppose we are given a dataset $\mathcal D = \{(\x_i,y_i)\}_{i=1}^N$ in general position.
Given some $\epsilon > 0$, let $\mathcal D_\epsilon = \{(\x'_i,y_i)\}_{i=1}^N$ be any perturbation of $\mathcal D$ such that $\|\x_i-\x'_i\|_2 \leq \epsilon$.

Let $\bar X \in \R^{N \times (d + 1)}$ denote the data matrix in homogeneous coordinates for $\mathcal D$.
Let $\bar X'$ denote the corresponding data matrix for $\mathcal D_\epsilon$.
We see that $\|\bar X - \bar X'\|_F \leq \sqrt{N}\epsilon$.
Hence some matrix $P \in \R^{N \times (d + 1)}$ exists such that $\bar X' = \bar X + P$ and $\|P\|_F \leq \sqrt{N}\epsilon$.

We can interpret the zonotope definition \eqref{eq:def_assoc_zono} as saying that a zonotope is the image of a hypercube under the matrix formed by its generators.
Hence if $\mathcal Z$ is the zonotope of the original dataset, we may write
\begin{equation}
    \mathcal Z = \left\{ \bar X^T \u \in \R^{d+1} \mid \u \in [0, 1]^{N} \right\}.
\end{equation}
Let $\mathcal Z'$ denote the corresponding zonotope for the perturbed dataset.

We now wish to show that the vertices of $\mathcal Z$ are in a one-to-one correspondence with the vertices $\mathcal Z'$ for sufficiently small $\epsilon$ with their sets of vertex barcodes coinciding.
Let $\b \in \{0,1\}^N$ be some binary vector such that $\p = \bar X^T \b$ is a vertex of $\mathcal Z$.
Then we know that some affine hyperplane $H \subseteq\R^{d+1}$ exists such that $H\cap \mathcal Z = \{\p\}$.

Let $\q = \bar X^T \c$, where $\c \in \{0,1\}^N$, be the image of an arbitrary vertex of the hypercube such that $\q \neq \p$.
Note that $\q$ is not necessarily a vertex of $\mathcal Z$.
We thus find some $\delta > 0$ such that the distance from $\q$ to the hyperplane $H$ is greater than $\delta$ for every such $\q$.
Furthermore, all such $\q$ will lie in exactly one of the half-spaces formed by the hyperplane.

Let $H'\subseteq\R^{d+1}$ be the affine hyperplane formed by shifting $H$ by $P^T\b$.
If we let $\p' = \bar X'^T \b$, it is straightforward to see that $\p' \in H'$.
Hence $H'$ intersects the perturbed zonotope $\mathcal Z'$.

Let $\q' = \bar X'^T \c$.
Note that we can write $\p' = \p + P^T\b$ and $\q' = \q + P^T\c$.
Let us now bound $\|P^T\b\|_2$.
We see that $\|\b\|_2 \leq \sqrt{N}$.
Using known relations between matrix norms, we see that $\|P^T\|_2 \leq \|P^T\|_F \leq \sqrt{N}\epsilon$.
Hence $\|P^T\b\|_2 \leq N\epsilon$.
By the exact same logic, we see that $\|P^T\c\|_2 \leq N\epsilon$.

Now suppose that we choose $\epsilon < \frac{\delta}{2N}$.
By the triangle inequality, we can see that $\q'$ can move a distance at most $2 N \epsilon < \delta$ relative to the hyperplane $H'$.
Since the distance from $\q$ to $H$ was greater than $\delta$, we see that every such $\q'$ must lie on the same side of the hyperplane $H'$.
Hence $H'\cap \mathcal Z' = \{\p'\}$, which implies that $\p'$ is a vertex of $\mathcal Z'$.
Hence every vertex barcode of $\mathcal Z$ is a vertex barcode of $\mathcal Z'$.
As $\mathcal D_\epsilon$ will also be in general position for small enough $\epsilon$, we can swap the roles of $\mathcal Z$ and $\mathcal Z'$ in our proof to see that every vertex barcode of $\mathcal Z'$ is a vertex barcode of $\mathcal Z$.
It thus follows that there is a one-to-one correspondence between vertex barcodes of $\mathcal Z$ and $\mathcal Z'$.

By the relationship between zonotope vertices and activation regions shown in \cref{sec:comb_struct}, we have thus proved that the set of convex optimization problems for $\mathcal D$ are a slightly perturbed version of the convex optimization problems for $\mathcal D_\epsilon$.

As any subset of a set of vectors in general position is also in general position, we can see that the solution of each convex optimization problem is continuous with respect to perturbations of the data \citep{agrawal2019differentiable}.
The global minimum of the loss is given by the minimum over the set of per-activation-region local minima.
Hence the global minimum is continuous with respect to the training dataset as it is the composition of two continuous functions.

\section{Examples of Discontinuities at Datasets not in General Position}\label{sec:breakdown_ngp_examples}

This section provides examples of the two sources of discontinuities of the minimal loss of a dataset when it is not in general position.

\subsection{Convex Problem Associated to a Vertex being Discontinuous}\label{sec:breakdown_ngp_zono_same}

Here we provide an example of a dataset whose optimal loss is not continuous with respect to the dataset.
For this dataset, the optimal vertex is the same and exists in both the original and perturbed zonotopes.
This means that its associated convex optimization problem is discontinuous with respect to the dataset.

Let us consider the dataset $\mathcal D \subseteq \R^2 \times \R$ given by
\begin{align*}
    X &= \begin{bmatrix}1&2&3&4&5\\0&0&0&0&0 \end{bmatrix}\\
    Y &= \begin{bmatrix}1&2&2.5&4&5\end{bmatrix}.
\end{align*}
Note that $\mathcal X$ lies entirely within the $x_2 = 0$ hyperplane and thus is not in general position.

Now consider the problem of optimizing a single affine ReLU over $\mathcal D$ with respect to the L1 loss.
We assume a linear second layer and take the ReLU's corresponding second layer weight to be 1.
The zonotope $\mathcal Z$ associated to this optimization problem is presented in \cref{fig:breakdown_ngp_zono_same}.\footnote{Technically this is a slice of the zonotope along the $x_2=0$ plane. The full zonotope is equal to the cylinder $\mathcal Z + \R\e_2$.}
Each vertex has been labeled with its minimal loss in bold and with its set of active example indices.
The vertex with the smallest loss of 0.1 is active on all of the examples.

\begin{figure}[t]
\begin{center}
\centerline{\includegraphics[width=0.8\linewidth]{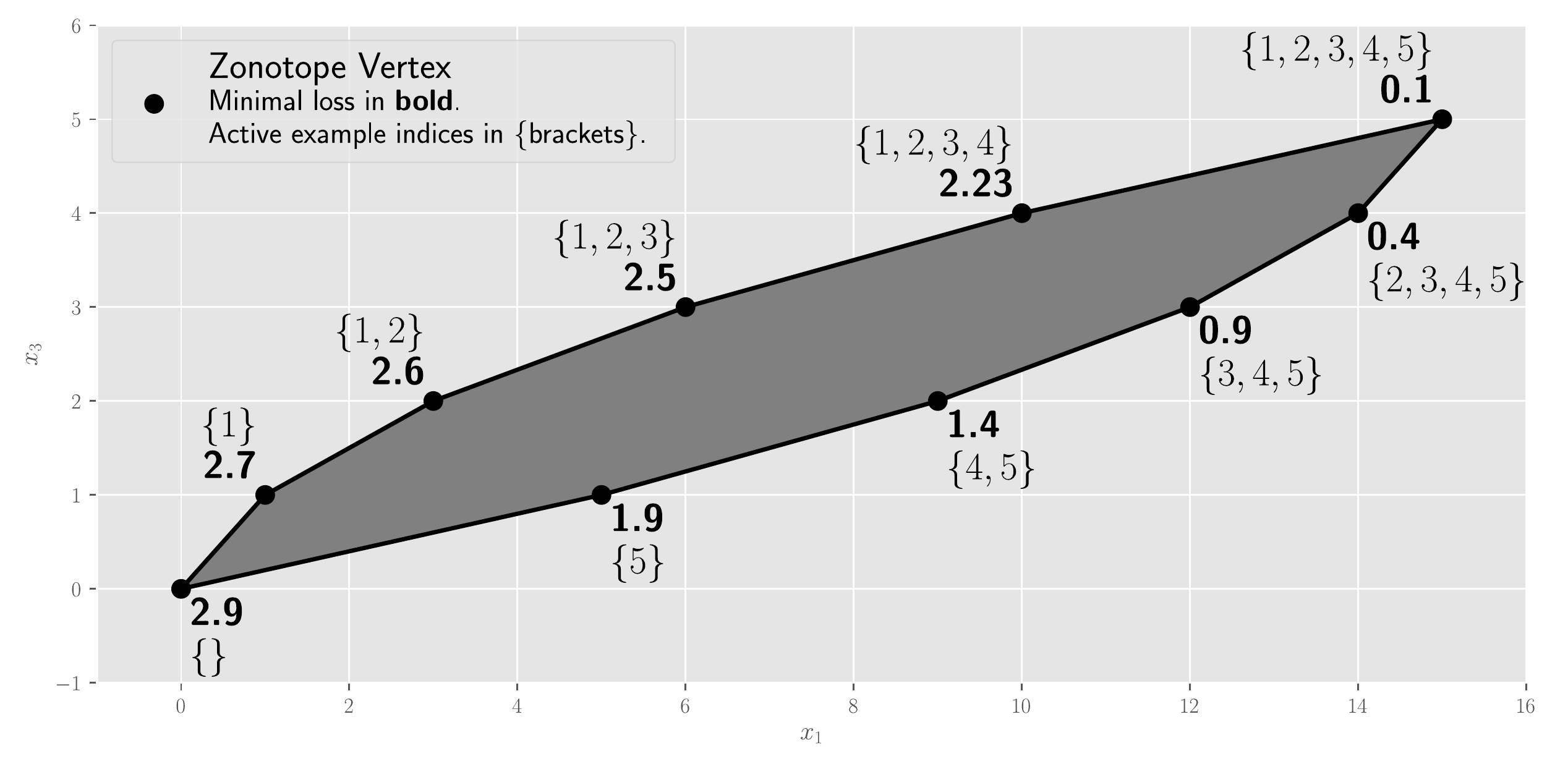}}
\vspace{-0.5em}
\caption{The zonotope $\mathcal Z$ associated to the dataset introduced in \cref{sec:breakdown_ngp_zono_same}.}
\label{fig:breakdown_ngp_zono_same}
\end{center}
\vskip -0.2in
\end{figure}

Now consider what happens when we perform the following perturbation on the dataset
\begin{equation*}
    X_\epsilon = \begin{bmatrix}1&2&3&4&5\\0&0&\epsilon&0&0 \end{bmatrix},
\end{equation*}
where $\epsilon > 0$ is arbitrarily small.
Let $\mathcal D_\epsilon = (X_\epsilon, Y)$.
If we set the parameters $(\w,b)$ of our ReLU to $\w = \begin{bmatrix}1 & -\frac{1}{2\epsilon} \end{bmatrix}^T$ and $b=0$, we see that we fit $\mathcal D_\epsilon$ exactly and thus obtain zero L1 loss.
These parameters belong to the same vertex as the global minimum of the unperturbed dataset.
Hence we conclude that the convex optimization problem associated to this vertex is discontinuous with respect to the dataset.

\subsection{Global Loss in New Vertex of Perturbed Zonotope}\label{sec:breakdown_ngp_zono_new}

Here we provide an example of a dataset whose optimal loss is not continuous with respect to the dataset.
For this dataset, the optimal vertex in the perturbed zonotope does not exist in the original zonotope.

Let us consider the dataset
$\mathcal D \subseteq \R^3 \times \R$ given by
\begin{align*}
    X &= \begin{bmatrix}-1&2&-1&-1\\0&1&1&-1\\0&0&0&0\end{bmatrix}\\
    Y &= \begin{bmatrix}4&3&2&1\end{bmatrix}.
\end{align*}

Consider the problem of optimizing a single linear ReLU over $\mathcal D$ with an L1 loss.
Assume that the second layer is linear with the ReLU's output weight set to 1.
Note that $\mathcal X$ lies entirely within the $x_3 = 0$ plane and thus is not in general position.
The zonotope $\mathcal Z$ associated to this optimization problem is presented in \cref{fig:breakdown_ngp_zono_new_2d}.\footnote{Technically this is a slice of the zonotope along the $x_3=0$ plane. The full zonotope is equal to the cylinder $\mathcal Z + \R\e_3$.}
Each vertex has been labeled with its minimal loss in bold and with its set of active example indices.
The vertex with the smallest loss of 1.25 has the examples $\{\x_2,\x_3\}$ active.

\begin{figure}[t]
\begin{center}
\centerline{\includegraphics[width=0.8\linewidth]{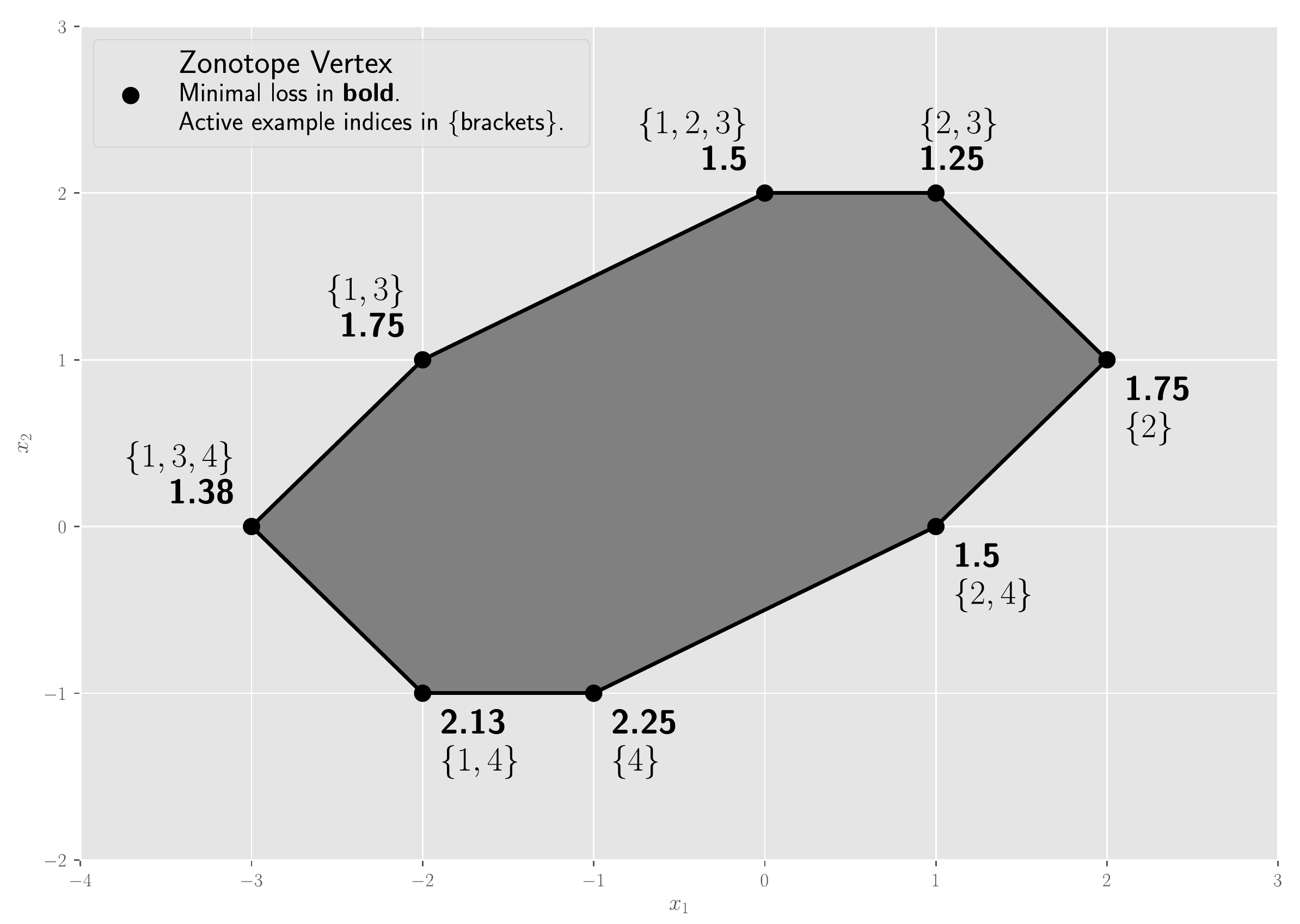}}
\vspace{-0.5em}
\caption{The zonotope $\mathcal Z$ associated to the dataset introduced in \cref{sec:breakdown_ngp_zono_new}.}
\label{fig:breakdown_ngp_zono_new_2d}
\end{center}
\vskip -0.2in
\end{figure}

Now consider what happens when we perform the following perturbation on the dataset
\begin{equation}
    X_\epsilon = \begin{bmatrix}-1&2&-1&-1\\0&1&1&-1\\0&\epsilon&0&0\end{bmatrix}
\end{equation}
where $\epsilon > 0$ is arbitrarily small.
Let $\mathcal D_\epsilon = (X_\epsilon, Y)$.
The global minimum of the loss for $\mathcal D_\epsilon$ occurs at the parameter value $\w = \begin{bmatrix}-\frac32 & \frac12 & \frac{11}{2\epsilon} \end{bmatrix}^T$.
The loss value at this point is 0.625, and these parameters are associated to the zonotope vertex with all examples active.
As evident from \cref{fig:breakdown_ngp_zono_new_2d}, such a vertex does not exist in the unperturbed zonotope.

\section{NP-Hardness}\label{sec:appendix_np_hard}

\Citet{goel2020tight} prove the NP-hardness of optimizing a single ReLU by reducing solving an instance of the NP-hard set cover problem to optimizing a single ReLU over a train dataset.
In the set cover problem, we are given a collection $\mathcal T = \{T_1, \dotsc, T_M\}$ of subsets of a given set $U$.
Given some $t \in \N$, the goal is determine whether a subcollection $\mathcal S \subseteq \mathcal T$ exists such that $U = \bigcup_{S\in\mathcal S} S$ and $|\mathcal S| \leq t$.

\subsection{Reduction to ReLU Optimization}\label{sec:single_relu_og_np_hard}
\Citet{goel2020tight} use a single ReLU without a bias as their model, so we may write our network as
\begin{equation}
    f_\w(\x) = \phi(\w^T\x).
\end{equation}
The input dimension of their model is $d = M + 2$.
Of these dimensions, $M$ correspond to members of $\mathcal T$ and two are used as ``constraint coordinates''.
We use $\e_{T_i}$ to denote a unit coordinate vector corresponding to $T_i$, and $\e_\gamma$ and $\e_1$ as the unit coordinate vectors for the constraint coordinates.

Set $\gamma = 0.01/M^2$.
Overall, they create $N = |U| + M + 2$ labeled training examples.
For the constraint coordinates, they create the examples
\begin{equation}\label{eq:def_xy_gamma}
    (\x_\gamma, y_\gamma) = (\e_\gamma, \gamma)
\end{equation}
and
\begin{equation}\label{eq:def_xy_1}
    (\x_1, y_1) = (\e_1, 1).
\end{equation}
For each $T_i \in \mathcal T$, they create the example
\begin{equation}\label{eq:def_xy_T}
    (\x_{T_i}, y_{T_i}) = (\e_\gamma + \e_{T_i}, \gamma).
\end{equation}
For each $u \in U$, they create the example
\begin{equation}\label{eq:def_xy_u}
    (\x_u, y_u) = (\e_1 + \sum_{T_i \ni u}\e_{T_i}, 0).
\end{equation}
Let $\mathcal D$ denote the entire labeled dataset, and let $\mathcal X$ denote the just the examples without labels.
Note that both $\mathcal D$ and $\mathcal X$ will generally be multisets since if $u,u'\in U$ belong to exactly the same set of subsets in $\mathcal T$, then $\x_u = \x_{u'}$.

Using mean squared error, the training loss of the network can be written as
\begin{equation}
    L(\w) = \frac{1}{N}\sum_{(\x,y)\in\mathcal D}(f_\w(\x) - y)^2.
\end{equation}
\Citet{goel2020tight} show that if the union of $t$ or fewer members of $\mathcal T$ equals $U$, then the global minimum of this loss will be less than or equal to $t\gamma^2/N$.
The weights $\w\in\R^d$ corresponding to this optima will have $w_\gamma = \gamma$, $w_1 = 1$, $w_{T_i} = -1$ for all $T_i \in \mathcal S$, and all other parameters set to zero.

The activation pattern for this optima will have $\x_1$ and $\x_\gamma$ being active while $\x_u$ is inactive for every $u\in U$.
Any $\x_{T_i}$ will be inactive if $T_i \in \mathcal S$ and active otherwise.

\subsection{Discontinuous Response to Perturbation}\label{sec:appendix_unstable_data_example}
Here, we demonstrate how to create a perturbed dataset $\mathcal D'_\epsilon$ so that the the global minimum of loss will always be at most $\gamma^2/N$ regardless of the solution to the set cover problem.
This will hold true even as the scale of the perturbation $\epsilon > 0$ approaches 0.

In this perturbation, we pick an arbitrary $T \in \mathcal T$ and set $\x_u' = \x_u + \epsilon \e_T$ for the examples corresponding to all $u\in U$.
All other examples are left unchanged.

Clearly, $\|\x-\x'\|_2\leq \epsilon$ for all original-perturbed example pairs.
To find a $\w$ with a loss value of $\gamma^2/N$, set $w_\gamma = \gamma$, $w_1 = 1$, $w_T \leq -\epsilon^{-1}$, and set all other parameters to 0.
In this case, the model's predictions are correct for every training example except $(\x_T,y_T)$.
In this case, the model predicts 0 while the label is $\gamma$, so the total loss is $\gamma^2/N$.

Note that the L2 norm of the parameters at the global minimum for the perturbed dataset approaches infinity as the size of the perturbation $\epsilon$ approaches 0.

The activation pattern at this optima will be the same as the activation pattern at the unperturbed optima except that $\x_T$ will be inactive while the rest of the $\x_{T_i}$ will be active.

We note that this activation pattern is achievable on the unperturbed dataset.
However, it requires setting $w_1$ to a small positive value, $w_\gamma$ to a relatively large positive value, all $w_{T_i}$ for $T_i\neq T$ to moderate negative values, and $w_T$ to a large enough negative value.
For example, setting $w_1 = \gamma$, $w_\gamma = 2$, $w_{T_i} = -1$ for $T_i \neq T$, and $w_T = -3$ works.
Hence as discussed in \cref{sec:discont_global_opt}, this discontinuity corresponds to a discontinuity in the constrained convex optimization problem \eqref{eq:activation_region_constrained_opt} associated to a vertex rather than the new optimum occurring at a vertex not present in the original zonotope.

\subsection{Reduction to Dataset in General Position (Proof of \Cref{thm:np_hard_gp})}\label{sec:appendix_reduction_general_position}

It is possible to perturb some of the examples in \cref{sec:single_relu_og_np_hard} to get a dataset in general position that is still the reduction of the subset-sum problem.

\begin{theorem}\label{thm:gen_pos_np_hard}
Let $\delta_1,\delta_2 \in \R$ be constants satisfying  $0 < \delta_1 < \delta_2 < \frac{1}{2d}$.
For each $u \in U$, replace $\x_u$ in the dataset $\mathcal D$ with let $\x'_u = \x_u - \etabf_u$, where $\etabf_u \in \R^d$ is noise sampled IID from the uniform distribution on $[\delta_1, \delta_2]^d$.
Denote this updated dataset as $\mathcal D'$ and let $\mathcal X'$ denote its examples without labels.
Then $\mathcal X'$ is in general linear position with probability 1, and the global minimum of
\begin{equation}
    L'(\w) = \frac{1}{N}\sum_{(\x,y)\in\mathcal D}(f_\w(\x) - y)^2
\end{equation}
is less than or equal to $t\gamma^2/N$ if and only if a set cover of $\mathcal T$ exists containing $t$ sets.
\end{theorem}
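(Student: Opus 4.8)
The plan is to verify the two claims separately: that $\mathcal X'$ is in general linear position almost surely, and that the perturbed instance remains a faithful reduction of set cover. For the general-position claim, I would first note that the $d = M+2$ unperturbed examples $\e_\gamma,\e_1$ and $\{\e_\gamma+\e_{T_i}\}$ already span $\R^d$ (subtracting $\e_\gamma$ recovers every $\e_{T_i}$), hence form a basis and are in general linear position among themselves. General position of the whole set then fails only on the union, over the finitely many size-$d$ index subsets $S$, of the zero sets of the determinant $\det_S$ of the chosen columns. Each $\det_S$ is a polynomial in the noise coordinates $\{(\etabf_u)_j\}$; the subset consisting purely of fixed columns is nonzero deterministically, and any $S$ containing at least one perturbed column yields a polynomial that is not identically zero, since the fixed columns of $S$ are independent and I can choose real values for the perturbed columns' noise completing them to a basis. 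A not-identically-zero polynomial vanishes only on a Lebesgue-null set, so each bad event has probability zero under the absolutely continuous noise, and a finite union is still null.

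For the forward direction (cover $\Rightarrow$ low loss) I would reuse the optimal weights of \cref{sec:single_relu_og_np_hard} but with the selected coordinates set to a large negative constant: $w_1=1$, $w_\gamma=\gamma$, $w_{T_i}=-K$ for $T_i\in\mathcal S$, and all other weights zero. The only examples touched by the perturbation are the $\x_u'$ (label $0$); their pre-activation is $1-Kc_u-\w^T\etabf_u$, where $c_u\ge 1$ counts the covering sets through $u$, and since $\sum_{T_i\in\mathcal S}(\etabf_u)_{T_i}\le|\mathcal S|\delta_2<\tfrac12$ the coefficient of $K$ is at most $-\tfrac12$, so a single large enough $K$ drives every such pre-activation below $0$. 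The loss is then exactly $|\mathcal S|\gamma^2/N\le t\gamma^2/N$, as the only nonzero per-example losses are the $\gamma^2$ contributed by each selected $\x_{T_i}$.

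The reverse direction (low loss $\Rightarrow$ cover) is where the real work lies. Given any $\w$ with $L'(\w)\le t\gamma^2/N$, the squared error of every example is at most $t\gamma^2$, so $w_1=\phi(w_1)=1+e_1$ with $|e_1|\le\sqrt t\gamma$ and $w_\gamma\le\gamma(1+\sqrt t)$. I would define $\mathcal S=\{T_i : w_\gamma+w_{T_i}\le 0\}$. Each selected $T_i$ forces the (unperturbed) example $\x_{T_i}$ to predict $0$ and thus contribute exactly $\gamma^2$, so $|\mathcal S|\gamma^2\le t\gamma^2$ gives $|\mathcal S|\le t$. It remains to show $\mathcal S$ covers $U$, which I would argue by contradiction, assuming some $u^*$ lies only in non-selected sets and lower-bounding the pre-activation $\w^T\x_{u^*}'$. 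Writing $w_{T_i}=\gamma+e_{T_i}-w_\gamma$ for the active (non-selected) sets through $u^*$ and bounding the selected weights by $w_{T_i}\le -w_\gamma$, the expression collapses to $1+(\text{small})+w_\gamma[\sigma-d_{u^*}-(\etabf_{u^*})_\gamma]-w_1(\etabf_{u^*})_1-(\text{small})$, where $\sigma=\sum_{T_i}(\etabf_{u^*})_{T_i}$ and $d_{u^*}\ge 1$.

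The main obstacle — and the reason the hypothesis $\delta_2<\tfrac{1}{2d}$ is needed — is controlling the perturbation term $\w^T\etabf_{u^*}$, which is a priori unbounded because a very negative $w_\gamma$ permits arbitrarily large compensating weights on the non-selected sets. The observation that resolves this is that the same negative $w_\gamma$ enters $\w^T\x_{u^*}'$ with coefficient $\sigma-d_{u^*}-(\etabf_{u^*})_\gamma$, which is strictly negative (here $\sigma\le M\delta_2<\tfrac12$ and $d_{u^*}\ge 1$, using $d=M+2$): driving $w_\gamma$ negative to inflate the perturbation simultaneously inflates the pre-activation. Hence $\w^T\x_{u^*}'$ is minimized with $w_\gamma$ near its maximum $\gamma(1+\sqrt t)$, where every weight except $w_1\approx 1$ is $O(\gamma M)$ and the perturbation contributes at most $w_1(\etabf_{u^*})_1\le\delta_2<\tfrac{1}{2d}$. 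This yields $\w^T\x_{u^*}'>1-\tfrac{1}{2d}-O(\gamma M\sqrt t)>\tfrac12$, so the loss on $\x_{u^*}'$ alone exceeds $\tfrac14\gg t\gamma^2$, contradicting the loss bound. Therefore $\mathcal S$ is a cover of size at most $t$, completing the equivalence.
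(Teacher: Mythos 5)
Your proposal is correct, and while your general-position argument and your forward direction match the paper's in substance — the paper likewise observes that $\{\x_1,\x_\gamma\}\cup\mathcal X_{\mathcal T}$ is linearly independent so that the absolutely continuous noise almost surely creates no new dependencies (your determinant/measure-zero argument is the rigorous form of that assertion), and its \cref{lem:set_cover_perturbed_satisfies} is your construction with the fixed weight $-2$ in place of your large $-K$, using $\delta_2<\tfrac{1}{2d}$ in exactly the same way — your treatment of the reverse direction is genuinely different. The paper never argues it directly: it proves two transfer lemmas, \cref{lem:og_loss_implies_perturbed_loss} and \cref{lem:perturbed_loss_implies_set_cover}, that route everything through Theorem 8 of \citet{goel2020tight} on the \emph{unperturbed} dataset. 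But note what those lemmas actually deliver: ``perturbed loss $>t\gamma^2/N$ implies no cover of size $\le t$'' is logically equivalent (by contraposition) to ``a cover of size $\le t$ implies perturbed loss $\le t\gamma^2/N$,'' i.e.\ to the forward direction again; the implication the theorem needs, that a low perturbed loss forces a cover to exist, is never established — and it is precisely the kind of statement that can fail under perturbation, as the paper's own \cref{sec:appendix_unstable_data_example} demonstrates by exhibiting a perturbation whose optimal loss becomes independent of the set cover instance. Your direct extraction argument — define $\mathcal S=\{T_i:w_\gamma+w_{T_i}\le 0\}$, get $|\mathcal S|\le t$ from the per-example losses at the selected $\x_{T_i}$, and rule out an uncovered $u^*$ by noting that the coefficient of $w_\gamma$ in the lower bound on $\w^T\x'_{u^*}$ is $\sigma-d_{u^*}-(\etabf_{u^*})_\gamma<-\tfrac12$, so the a priori unbounded weights cannot conspire to keep that pre-activation small — supplies exactly this missing implication and correctly isolates where the hypothesis $\delta_2<\tfrac{1}{2d}$ is used. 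In short: your route is longer and requires the quantitative bookkeeping you sketch, but it is self-contained and actually closes the hard direction; the paper's route is shorter but, as written, only re-derives the easy one.
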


The rest of this section is devoted to the proof of \cref{thm:gen_pos_np_hard}.
We first prove that $\mathcal X'$ is indeed in general position.
We then prove both directions of the if and only if statement.

\subsubsection{General Position of Perturbed Dataset}

\begin{lemma}
The examples of the perturbed dataset $\mathcal X'$ are in general linear position.
\end{lemma}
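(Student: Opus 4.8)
The plan is to reduce general linear position to a finite collection of polynomial non-vanishing conditions and then invoke the standard fact that a nonzero polynomial vanishes only on a Lebesgue-null set. Recall that $\mathcal X'$ consists of $N = |U| + M + 2$ vectors in $\R^d$ with $d = M + 2$, of which exactly $d$ are left unperturbed (the constraint examples $\x_\gamma,\x_1$ and the $M$ set examples $\x_{T_i}$) while the $|U|$ examples $\x'_u = \x_u - \etabf_u$ each carry an independent noise vector drawn from a continuous density. Since $N > d$, saying that $\mathcal X'$ is in general linear position is equivalent to saying that every $d$-element subset $S\subseteq\mathcal X'$ is linearly independent, because any linearly dependent subset of size less than $d$ would extend to a linearly dependent $d$-subset. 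It therefore suffices to show that, with probability $1$, the $d\times d$ determinant $\det S$ is nonzero for each of the finitely many $d$-subsets $S$, after which a union bound finishes the argument.

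First I would dispatch the unique $d$-subset containing no randomness, namely the collection of all $d$ unperturbed vectors. I would check directly that $\{\e_\gamma\}\cup\{\e_\gamma+\e_{T_i}\}_{i=1}^M\cup\{\e_1\}$ is a basis of $\R^d$: subtracting $\e_\gamma$ from each $\e_\gamma+\e_{T_i}$ recovers the standard vector $\e_{T_i}$, so the span contains $\e_\gamma$, $\e_1$, and all $\e_{T_i}$, i.e.\ all of $\R^d$, and there are exactly $d$ of these vectors. Hence this subset is linearly independent deterministically.

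For every other $d$-subset $S$ -- each of which contains at least one perturbed example -- I would observe that $\det S$ is a polynomial in the noise variables $\{\etabf_u : \x'_u \in S\}$, since each entry of $\x'_u = \x_u - \etabf_u$ is affine in $\etabf_u$. The key step is to show this polynomial is not identically zero. Here I would exploit that the noise enters in all $d$ coordinates and is mutually independent across $u$: treating the $\etabf_u$ as free real parameters, each perturbed vector $\x'_u$ can independently be made an arbitrary element of $\R^d$, so I can choose values completing the (linearly independent) unperturbed members of $S$ to a basis, forcing $\det S \neq 0$ at that assignment. Thus $\det S$ is a nonzero polynomial, its real zero set has Lebesgue measure zero, and -- since the uniform law on $[\delta_1,\delta_2]^d$ is absolutely continuous with respect to Lebesgue measure -- the event $\det S = 0$ has probability $0$.

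Taking the union over the finitely many $d$-subsets then gives that, with probability $1$, every $d$-subset is linearly independent, which is precisely general linear position. I expect the only real (though mild) obstacle to be the non-vanishing argument for the mixed subsets; note that the support constraints $0 < \delta_1 < \delta_2 < \frac{1}{2d}$ are irrelevant to genericity and are needed only to preserve the set-cover reduction, so I am free to use that the noise has full coordinate support and a density when establishing general position.
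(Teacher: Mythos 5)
Your proof is correct and follows essentially the same route as the paper's: both rest on the deterministic linear independence of the $d$ unperturbed vectors $\{\e_\gamma, \e_1\} \cup \{\e_\gamma + \e_{T_i}\}_{i=1}^M$ plus the observation that the continuous i.i.d.\ noise almost surely creates no new linear dependencies. The only difference is one of rigor: the paper simply asserts the almost-sure genericity, whereas you supply the standard justification (reduction to $d$-subsets, non-vanishing of the determinant polynomial, measure-zero zero sets, and a union bound), which is a faithful expansion of the paper's argument rather than a different one.
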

\begin{proof}
We examine linear rather than affine dependencies between examples since the ReLU that we are using has no bias.
Let us partition the training examples as
\begin{equation}
    \mathcal X' = \mathcal X'_U \cup \mathcal X_{\mathcal T} \cup \{\x_1, \x_\gamma\}.
\end{equation}
where $\mathcal X_{\mathcal T}$ corresponds the examples \eqref{eq:def_xy_T} and $\mathcal X'_U$ to corresponds to the examples in \eqref{eq:def_xy_u} with perturbations as in \cref{thm:gen_pos_np_hard}.

From their definitions, it is clear that the set of $N$ vectors $\mathcal X_{\mathcal T} \cup \{\x_1, \x_\gamma \}$ are linearly independent and thus in general position.
Since the $\{\etabf_u\}_{u \in U}$ are sampled IID from the uniform distribution on $[\delta_1, \delta_2]^d$, the introduction of the examples $\mathcal X'_U$ almost surely introduces no new nontrivial linear dependencies.
\end{proof}

\subsubsection{Set Cover of Required Size Exists}
Now suppose that a set cover $\mathcal S \subseteq \mathcal T$  of size $t$ exists.
Choose parameters $\w \in \R^d$ with coordinates $w_1 = 1$, $w_\gamma = \gamma$, $w_{T_i} = -2$ if $T_i \in \mathcal S$, and $w_{T_i} = 0$ otherwise.

\begin{lemma}\label{lem:set_cover_perturbed_satisfies}
The loss on the perturbed dataset is equal to $t\gamma^2/N$ when the parameters are set to $\w$.
\end{lemma}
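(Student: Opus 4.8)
The plan is to evaluate the ReLU output $f_\w(\x) = \phi(\w^T\x)$ directly on each of the four groups of training examples and sum the squared errors, exploiting the fact that the chosen $\w$ makes the preactivation $\w^T\x$ hit the label exactly on every example except the $t$ examples $\x_{T_i}$ with $T_i \in \mathcal S$. First I would dispatch the two constraint examples: since $\w^T\x_\gamma = w_\gamma = \gamma > 0$ and $\w^T\x_1 = w_1 = 1 > 0$, both outputs equal their labels $\gamma$ and $1$ exactly and contribute nothing.

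Next I would handle the examples $\x_{T_i} = \e_\gamma + \e_{T_i}$, whose preactivation is $w_\gamma + w_{T_i} = \gamma + w_{T_i}$. When $T_i \notin \mathcal S$ we have $w_{T_i} = 0$, so the output is $\phi(\gamma) = \gamma$, matching the label; when $T_i \in \mathcal S$ we have $w_{T_i} = -2$, so the preactivation $\gamma - 2$ is negative, the output is $0$, and the squared error against the label $\gamma$ is $\gamma^2$. Since $|\mathcal S| = t$, exactly these $t$ examples contribute, for a total of $t\gamma^2$.

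The real work, and the step I expect to be the main obstacle, is showing that every perturbed example $\x'_u = \x_u - \etabf_u$ stays inactive and hence contributes nothing. Because $\mathcal S$ is a set cover, every $u$ lies in at least one $T_i \in \mathcal S$, so $\sum_{T_i \ni u} w_{T_i} \leq -2$ and the unperturbed preactivation $w_1 + \sum_{T_i\ni u} w_{T_i}$ is at most $-1$. It then suffices to bound the perturbation contribution. I would use $|\w^T\etabf_u| \leq \delta_2 \sum_j |w_j|$ (valid since each coordinate of $\etabf_u$ lies in $[\delta_1,\delta_2]$), together with $\sum_j |w_j| = 1 + \gamma + 2t < 2d$ (using $t \leq M < d$ and that $\gamma$ is negligible) and the hypothesis $\delta_2 < \frac{1}{2d}$, to conclude $|\w^T\etabf_u| < 1$. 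Hence $\w^T\x'_u < -1 + 1 = 0$, the output is $0$, and since the label is $0$ there is no loss.

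Finally, summing over all $N$ examples, the only nonzero terms are the $t$ contributions of $\gamma^2$, yielding empirical loss $\frac{1}{N}\cdot t\gamma^2 = t\gamma^2/N$, as claimed. Everything except the perturbation bound is a routine case check; the delicate point is verifying that the accumulated noise across all $d$ coordinates cannot overcome the $-1$ margin, which is exactly what the constraint $\delta_2 < \frac{1}{2d}$ guarantees.
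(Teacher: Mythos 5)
Your proposal is correct and follows essentially the same route as the paper's proof: the identical case check on the unperturbed examples (yielding exactly $t$ contributions of $\gamma^2$), followed by the set-cover margin argument showing every perturbed example $\x'_u$ has negative preactivation and hence zero loss. The only cosmetic difference is in bounding the noise term: you use $|\w^T\etabf_u| \leq \delta_2\|\w\|_1 = \delta_2(1+\gamma+2t) < 2d\delta_2$, whereas the paper bounds $-\w^T\etabf_u \leq 2d\delta_2$ directly from $w_j \geq -2$ and the positivity of the noise entries; both yield the same conclusion via $\delta_2 < \frac{1}{2d}$.
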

\begin{proof}
We start by looking at the examples that are unchanged from the original dataset $\mathcal D$ in our perturbed version $\mathcal D'$.
We see that $f_\w(\x_\gamma) = w_\gamma = \gamma = y_\gamma$ and $f_\w(\x_1) = w_1 = 1 = y_1$, so these two examples have a loss of zero.
When $T_i \in \mathcal S$, we have $f_\w(\x_{T_i}) = \phi(w_\gamma + w_{T_i}) = \phi(\gamma - 2) = 0$ since $\gamma < 2$, so these examples incur a loss of $y_{T_i}^2 / N = \gamma^2 / N$.
When $T_i \notin \mathcal S$, we have $f_\w(\x_{T_i}) = \phi(w_\gamma + w_{T_i}) = \phi(\gamma) = \gamma = y_{T_i}$, so these examples have a loss of zero.
Overall these examples contribute a total of $t\gamma^2/N$ to the loss.

Now consider the examples $\x'_u \in \mathcal X'_U$.
By definition, $\x'_u = \x_u - \etabf_u = \e_1 + \sum_{T_i \ni u}\e_{T_i} - \etabf_u$.
The preactivation for such an example is
\begin{align*}
    \w^T\x'_u &= w_1 + \sum_{T_i \ni u} w_{T_i} - \w^T\etabf_u \\
    &= 1 - \sum_{u \in T_i \in \mathcal S} w_{T_i} 2  - \w^T\etabf_u.
\end{align*}
Since $\mathcal S$ is a set cover of $U$, we know at least one $T_i \in \mathcal S$ exists such that $u \in T_i$.
Hence $1 - \sum_{u \in T_i \in \mathcal S} w_{T_i} 2 \leq -1$.
Recall that the entries of $\etabf_u$ are all positive and less than $\delta_2$, and recall that all entries of $\w$ are greater than or equal to $-2$.
Thus, $-\w^T\etabf_u \leq 2d\delta_2$.
Because we defined $\delta_2$ such that $\delta_2 < \frac{1}{2d}$, we see that $2d\delta<1$.
Hence $-\w^T\etabf_u < 1$, so we see that the preactivation is less than $-1 + 1 = 0$.
Thus applying a ReLU activation to this preactivation will output a 0.
Thus $f_\w(\x'_u) = 0 = y'_u$ for all $u \in U$.
Hence the examples from $\mathcal X'_U$ do not contribute to the loss, and the loss at $\w$ is $t\gamma^2/N$.
\end{proof}

\subsubsection{Set Cover of Required Size Does Not Exist}
Throughout this section, let $\w \in \R^d$ be parameter values as defined in the previous section.
We first prove the following lemma.

\begin{lemma}\label{lem:og_loss_implies_perturbed_loss}
If the minimal loss over the original dataset $\mathcal D$ is less than or equal to $t\gamma^2/N$, then the minimal loss over the perturbed dataset $\mathcal D'$ is less than or equal to $t\gamma^2/N$.
\end{lemma}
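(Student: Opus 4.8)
The plan is to route the implication through the combinatorial side of the reduction rather than through any direct weight manipulation. First I would invoke the correctness of the original set cover reduction of \citet{goel2020tight} on the unperturbed dataset $\mathcal D$: because their construction is a faithful reduction, a minimal loss over $\mathcal D$ of at most $t\gamma^2/N$ forces the existence of a subcollection $\mathcal S \subseteq \mathcal T$ with $\bigcup_{S \in \mathcal S} S = U$ and $|\mathcal S| = t' \le t$. This is the converse of their forward claim (that a cover of size $\le t$ yields loss $\le t\gamma^2/N$) and is precisely the direction that makes their reduction valid, so it can be cited directly.

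Having extracted such a cover, I would then appeal to \cref{lem:set_cover_perturbed_satisfies}, which exhibits explicit weights $\w$ with $w_1 = 1$, $w_\gamma = \gamma$, $w_{T_i} = -2$ for $T_i \in \mathcal S$, and zeros elsewhere, attaining loss exactly $|\mathcal S|\gamma^2/N$ on the perturbed dataset $\mathcal D'$. The only bookkeeping is that \cref{lem:set_cover_perturbed_satisfies} is phrased for a cover of size exactly $t$, whereas here $|\mathcal S| = t'$ may be strictly smaller; its construction and verification are insensitive to this, so the attained loss is $t'\gamma^2/N \le t\gamma^2/N$. Since the minimum of $L'$ is bounded above by its value at any fixed $\w$, this yields $\min_\w L'(\w) \le t\gamma^2/N$, which is the claim.

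The step I expect to be the real crux — and the reason for passing through the cover instead of recycling an optimal $\w^\ast$ for $\mathcal D$ — is that the perturbation is not uniformly small: each $\x_u$ is shifted by $\etabf_u$ with entries in $[\delta_1,\delta_2]$, so the preactivations change by $\w^{\ast T}\etabf_u$, a quantity of uncontrolled sign. An example $\x_u$ that was inactive under $\w^\ast$ on $\mathcal D$ could therefore become active on $\mathcal D'$ and inflate the loss, so a verbatim transfer of $\w^\ast$ is not available. The cover-based construction sidesteps this because the strengthened coefficient $w_{T_i} = -2$ forces every preactivation $\w^T\x'_u$ to be at most $-1 + 2d\delta_2 < 0$ under the standing assumption $\delta_2 < \frac{1}{2d}$, exactly as established in the proof of \cref{lem:set_cover_perturbed_satisfies}. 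Hence the only ingredient beyond that lemma is the cited converse direction of the \citet{goel2020tight} reduction, and the argument reduces to chaining these two facts.
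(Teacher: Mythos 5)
Your proposal is correct and follows essentially the same route as the paper's proof: invoke the converse direction of the \citet{goel2020tight} reduction to extract a set cover from the assumed loss bound on $\mathcal D$, then chain into \cref{lem:set_cover_perturbed_satisfies} to exhibit weights achieving loss at most $t\gamma^2/N$ on $\mathcal D'$. Your additional care about covers of size $t' < t$ and your explanation of why transferring an optimal $\w^\ast$ directly would fail are sound refinements, but they do not change the underlying argument.
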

\begin{proof}
From the proof of Theorem 8 in \citet{goel2020tight}, we know that the minimal loss over the original dataset is less than or equal to $t\gamma^2/N$ only if a set cover of size $t$ exists.
This lemma then follows directly from \cref{lem:set_cover_perturbed_satisfies}.
\end{proof}

We now have the following.

\begin{lemma}\label{lem:perturbed_loss_implies_set_cover}
If the minimal loss over the perturbed dataset $\mathcal D'$ is greater than $t\gamma^2/N$, then no set cover of $U$ exists that is comprised of $t$ or fewer sets from $\mathcal T$.
\end{lemma}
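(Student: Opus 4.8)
The plan is to prove the logically equivalent contrapositive: if a set cover $\mathcal{S} \subseteq \mathcal{T}$ of $U$ with $s := |\mathcal{S}| \leq t$ exists, then the global minimum of $L'$ over the perturbed dataset $\mathcal{D}'$ is at most $t\gamma^2/N$. Since this is equivalent to the stated implication, it suffices. This is the ``easy'' half of the biconditional in \cref{thm:gen_pos_np_hard}, so I do not expect a genuine obstacle; the real content is checking that the parameter setting already exhibited achieves the claimed loss for a cover of \emph{arbitrary} size rather than only size exactly $t$.

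First I would reuse the construction from \cref{lem:set_cover_perturbed_satisfies} verbatim, taking $w_1 = 1$, $w_\gamma = \gamma$, $w_{T_i} = -2$ for $T_i \in \mathcal{S}$, and $w_{T_i} = 0$ for all remaining coordinates. The key observation is that the analysis there never uses the cardinality of $\mathcal{S}$; it invokes only the covering property $U = \bigcup_{S \in \mathcal{S}} S$. In particular, for each perturbed example $\x'_u = \e_1 + \sum_{T_i \ni u}\e_{T_i} - \etabf_u$, coverage guarantees some $T_i \in \mathcal{S}$ with $u \in T_i$, whence $\sum_{T_i \ni u} w_{T_i} \leq -2$ and $\w^T\x'_u \leq -1 - \w^T\etabf_u$. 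The bound $-\w^T\etabf_u \leq 2d\delta_2 < 1$, which follows from $\delta_2 < \frac{1}{2d}$ together with every entry of $\w$ being at least $-2$, then forces $\w^T\x'_u < 0$, so these examples contribute nothing to the loss exactly as before.

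With that in hand I would recompute the total loss at $\w$. The constraint examples $\x_1,\x_\gamma$ are fit exactly, each example $\x_{T_i}$ with $T_i \notin \mathcal{S}$ is fit exactly, and each of the $s$ examples $\x_{T_i}$ with $T_i \in \mathcal{S}$ contributes $\gamma^2/N$; all perturbed examples $\x'_u$ contribute zero by the previous paragraph. Summing yields a loss of exactly $s\gamma^2/N$ at $\w$, so the global minimum of $L'$ is at most $s\gamma^2/N \leq t\gamma^2/N$, which is the desired contrapositive. The only point needing care is precisely this strengthening of \cref{lem:set_cover_perturbed_satisfies}: although that lemma is phrased for a cover of size exactly $t$, its proof in fact shows that a cover of size $s$ yields loss $s\gamma^2/N$, and it is this sharper statement that lets any cover of size $\leq t$ certify a loss at or below the threshold $t\gamma^2/N$.
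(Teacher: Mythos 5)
Your proof is correct, but it takes a genuinely different route from the paper's. The paper proves this lemma without ever touching the explicit weight construction: it chains the contrapositive of \cref{lem:og_loss_implies_perturbed_loss} (if the minimal loss on $\mathcal D'$ exceeds $t\gamma^2/N$, then so does the minimal loss on the original dataset $\mathcal D$) with Theorem 8 of \citet{goel2020tight} applied to the \emph{unperturbed} dataset, which converts the conclusion about $\mathcal D$ into the nonexistence of a cover of size at most $t$. You instead stay entirely on the perturbed dataset and prove the contrapositive directly, reusing the construction of \cref{lem:set_cover_perturbed_satisfies} and observing that its analysis uses only the covering property of $\mathcal S$ and the bounds $w_{T_i} \geq -2$, $2d\delta_2 < 1$, never the cardinality $|\mathcal S|$, so a cover of size $s \leq t$ certifies a loss of exactly $s\gamma^2/N \leq t\gamma^2/N$. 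Your route is more self-contained: it needs neither \cref{lem:og_loss_implies_perturbed_loss} nor Goel et al.'s Theorem 8, and it explicitly addresses the cardinality wrinkle (covers of size strictly less than $t$) that the paper's lemmas, all phrased for covers of size exactly $t$, silently gloss over. What the paper's route buys in exchange is brevity and modularity: given its two prior lemmas the proof is a two-line deduction that treats the original reduction of \citet{goel2020tight} as a black box and keeps all reasoning about the perturbation itself confined to \cref{lem:og_loss_implies_perturbed_loss}.
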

\begin{proof}
The contraposition of \cref{lem:og_loss_implies_perturbed_loss}
states that if the the minimal loss over the perturbed dataset $\mathcal D'$ is greater than $t\gamma^2/N$, then the minimal loss over the original dataset $\mathcal D$ is greater than $t\gamma^2/N$.
From the proof of Theorem 8 in \citet{goel2020tight}, this implies that no set cover of $U$ exists that consists of $t$ or fewer sets from $\mathcal T$.
\end{proof}

\section{Proof for Upper Bound on Required Overparameterization}\label{sec:tighter_bounds_proof}
In this section, we provide a proof of \cref{thm:tighter_op_bounds}.
We begin with the following lemma.

\begin{lemma}\label{lem:tighter_bounds_step_fit}
Let $A,B\subseteq \R^d$ be finite subsets such that $|B| = d+1$ and the $d$-th coordinate of any $\a\in A$ is strictly less than the $d$-th coordinate of any $\b \in B$.
Furthermore, assume that their union $A \cup B$ is in general position.
For any $\b \in B$, let $y_\b\in\R$ be its label.
Then there exist parameters $\w_1,\w_2 \in \R^{d+1}$ satisfying
\begin{align*}
    & \w_1^T\bar\a \leq 0, & \w_1^T\bar\b \geq 0, & \\
    & \w_2^T\bar\a \leq 0, & \w_2^T\bar\b \geq 0, & \\
\end{align*}
for all $\a\in A$ and $\b\in B$
such that
the function
\begin{equation}\label{eq:two_relu_pm}
    f(\x) = \phi(\w_1^T\bar\x) - \phi(\w_2^T\bar\x)
\end{equation}
satisfies $f(\a) = 0$ for all $\a \in A$ and $f(\b) = y_\b$ for all $\b\in B$.
\end{lemma}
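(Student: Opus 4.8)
The plan is to exploit the fact that the prescribed sign constraints force both ReLU units to be inactive on every $\a \in A$ and active on every $\b \in B$. Since $\w_i^T\bar\a \le 0$ gives $\phi(\w_i^T\bar\a) = 0$, the condition $f(\a) = 0$ holds automatically for all $\a \in A$, independent of the precise weights. On $B$, the constraint $\w_i^T\bar\b \ge 0$ gives $\phi(\w_i^T\bar\b) = \w_i^T\bar\b$, so $f(\b) = (\w_1 - \w_2)^T\bar\b$. Writing $\u = \w_1 - \w_2$, the requirement $f(\b) = y_\b$ for all $\b \in B$ thus collapses to the single linear system $\u^T\bar\b = y_\b$.

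Because $B \subseteq A \cup B$ is in general position and $|B| = d+1$, the homogeneous vectors $\{\bar\b : \b \in B\}$ are linearly independent and form a basis of $\R^{d+1}$. Hence the system $\u^T\bar\b = y_\b$ has a unique solution $\u \in \R^{d+1}$, which pins down the required difference $\w_1 - \w_2$. What remains is to split $\u$ into two weight vectors that each also satisfy the activation (sign) constraints on $A$ and $B$.

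For the splitting I would use the coordinate-separation hypothesis. Since the $d$-th coordinate of every $\a \in A$ is strictly below that of every $\b \in B$, choosing any threshold $c$ strictly between $\max_{\a\in A}\e_d^T\a$ and $\min_{\b\in B}\e_d^T\b$ yields a vector $\mathbf{s}\in\R^{d+1}$ (the homogeneous normal of the hyperplane $x_d = c$, i.e.\ $\mathbf{s}^T\bar\x = \e_d^T\x - c$) with $\mathbf{s}^T\bar\a < 0 < \mathbf{s}^T\bar\b$ for all $\a \in A$, $\b \in B$. I then set $\w_2 = t\,\mathbf{s}$ and $\w_1 = \u + t\,\mathbf{s}$, so that $\w_1 - \w_2 = \u$ by construction. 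The constraints for $\w_2$ read $t\,\mathbf{s}^T\bar\a \le 0$ and $t\,\mathbf{s}^T\bar\b \ge 0$, which hold for every $t > 0$; those for $\w_1$ read $\u^T\bar\a + t\,\mathbf{s}^T\bar\a \le 0$ and $y_\b + t\,\mathbf{s}^T\bar\b \ge 0$. Since $\mathbf{s}^T\bar\a < 0$ and $\mathbf{s}^T\bar\b > 0$, both of the latter are satisfied once $t$ exceeds an explicit finite threshold, e.g.\ $t > \max_{\a\in A} \u^T\bar\a / (-\mathbf{s}^T\bar\a)$ and $t > \max_{\b\in B} (-y_\b)/(\mathbf{s}^T\bar\b)$. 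Taking $t$ larger than all these bounds produces weights meeting every requirement of the lemma.

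The argument is a reduction followed by a large-separating-direction adjustment, so no step is genuinely difficult; the only point requiring care is verifying that a single $t$ simultaneously keeps $\w_1$ non-positive on all of $A$ and non-negative on all of $B$, which is guaranteed because the strict signs of $\mathbf{s}^T\bar\a$ and $\mathbf{s}^T\bar\b$ drive each finite family of inequalities the correct way. I expect the main conceptual observation—rather than an obstacle—to be recognizing that two ReLUs sharing the activation pattern ``inactive on $A$, active on $B$'' compute exactly the affine function $\u^T\bar\x$ on $B$, reducing the nonlinear fitting problem to linear interpolation plus a feasibility adjustment.
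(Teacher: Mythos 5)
Your proposal is correct and follows essentially the same route as the paper's proof: both solve the linear interpolation problem $\u^T\bar\b = y_\b$ on $B$ (possible by general position of the $d+1$ points), then add a large multiple of the coordinate-separating direction $\e_d - c\,\e_{d+1}$ to both weight vectors so that the required sign constraints hold while the difference $\w_1 - \w_2 = \u$ is preserved. The only cosmetic difference is that the paper normalizes the separating vector to have inner products of magnitude greater than 1 and picks an explicit scale $\beta$, whereas you take the scale $t$ large enough directly; these are equivalent.
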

\begin{proof}
Let $\alpha \in \R$ be some value that is strictly greater than the $d$-th coordinate of any $\a \in A$ and strictly less than the $d$-th coordinate of $\b\in B$.
Such an $\alpha$ is guaranteed to exist based on the assumptions of the lemma.
Now let $\u = \e_d - \alpha \e_{d+1}\in\R^{d+1}$.
It is clear that $\u^T\bar\a < 0$ for all  $\a \in A$ and $\u^T\bar\b > 0$ for all $\b\in B$.
We can thus multiply $\u$ by a positive scalar to get a vector $\Tilde\u\in\R^{d+1}$ such that  $\Tilde\u^T\bar\a < -1$ for all  $\a \in A$ and $\Tilde\u^T\bar\b > 1$ for all $\b\in B$.

Let us now look at the unconstrained problem of finding a $\w\in\R^{d+1}$ such that $\w^T\bar\b = y_\b$ for all $\b \in B$.
As $B$ contains $d+1$ examples in general positions, such a $\w$ will always exist and can be found via standard linear regression.

Define
\begin{equation}
    \beta_A = \max_{\a\in A} \phi(\w^T\bar\a)
\end{equation}
and
\begin{equation}
    \beta_B = \max_{\b\in B} \phi(-\w^T\bar\b).
\end{equation}
Let $\beta = \max\{\beta_A, \beta_B\}$.
Then setting $\w_1 = \w + \beta\Tilde\u$ and $\w_2 = \beta\Tilde\u$ satisfies the conditions of the lemma.
\end{proof}

We are now ready for the proof.
Let $\mathcal D$ and $\mathcal D_k$ for $k = 1, \dotsc, \lceil \frac{N}{d+1} \rceil$ be defined as in \cref{sec:tighter_bounds}.
Let us do a proof by induction on $k$.
Suppose that $f$ is a ReLU network with $2\lceil \frac{N}{d+1} \rceil - 2$ hidden units fitting the labels in $\bigcup_{k'=1}^{k-1} \mathcal D_{k'}$ exactly.

Let us now relabel the entire dataset by subtracting the predictions of $f$ from the labels to get
\begin{equation}
    \mathcal D' = \left\{(\x, y - f(\x)) \mid (\x, y) \in \mathcal D  \right\}.
\end{equation}
Define $\mathcal D'_k$ accordingly.
Clearly, we have the labels being all zero for all examples in $\mathcal D' \setminus \mathcal D_k'$.
The labels for examples in $\mathcal D_k'$ will generally be non-zero.

We can use \cref{lem:tighter_bounds_step_fit} to find a unit layer network $g$ such that $g(\x) = 0 = y$ for all $(\x, y) \in \mathcal D' \setminus \mathcal D_k'$ and $g(\x) = y$ for all $(\x, y) \in \mathcal D_k'$.
Hence $g$ fits $\mathcal D'$ exactly.
From this it is clear that $f+g$ fits the original dataset $\mathcal D$ exactly.
We can find a ReLU network with $2\lceil \frac{N}{d+1} \rceil$ hidden units representing $f+g$ by having its last two units be the units of $g$ and the remaining units be the units of $f$.

\section{Modified Greedy Local Search}\label{sec:mgls}

\begin{algorithm}[tb]
    \caption{Modified Greedy Local Search (mGLS) Heuristic}\label{alg:mod_local_search}
\begin{algorithmic}
  \STATE {\bfseries Input:} data $\mathcal D = \{\x_i,y_i\}_{i=1}^N$, output weights $\v \in \R^{m + 1}$, max steps $T \in \N$
  \STATE $A_0 \in \operatorname{vert}(\mathcal Z^m)$ \COMMENT{Random initial zonotope vertex.}
  \FOR{$t \in \{0, \dotsc, T\}$}
        \STATE $A_{t+1} \gets A_t$
        \STATE $W^* \gets $ solution of \eqref{eq:activation_region_constrained_opt} for $A_t$ 
        \IF {any of the constraints in \eqref{eq:activation_region_constrained_opt} are equalities}
            \STATE $A_f \gets$ $A_t$ with all those constraints flipped
            \STATE $N_f \gets$ subset of neighbors of $A_t$ differing only on one of those constraints
            \STATE $\mathcal N \gets (\{A_f\}, N_f, \operatorname{neighbors}(A_t) \setminus N_f)$
        \ELSE
            \STATE $\mathcal N \gets (\operatorname{neighbors}(A_t))$
        \ENDIF
        \FOR{$G \in \mathcal N$}
            \FOR{$A' \in G$}
                \IF{$\mathcal L^*(A'; \mathcal D) < \mathcal L^*(A_{t+1}; \mathcal D)$}
                \STATE $A_{t+1} \gets A'$
                \STATE \textbf{continue} main loop
            \ENDIF
            \ENDFOR
        \ENDFOR
        \IF{$A_{t+1} = A_t$}
            \STATE \textbf{return} $A_t$
        \ENDIF
  \ENDFOR
\STATE \textbf{return} $A_T$
\end{algorithmic}
\end{algorithm}

This section provides more information on the additional heuristics used in the mGLS algorithm introduced in \cref{sec:diff_comb_search}.
The purpose of these modifications is to reduce the typical number of convex problems that we have to solve in a run of the algorithm.

The major difference is that as we iterate over neighboring zonotope vertices, we move to any vertex with a lower loss than the current vertex.
This is in contrast to \cref{alg:local_search}, which evaluates the loss at every neighbor and moves to the one with the lowest loss.
Especially near the start of the optimization procedure, we find that this greatly reduces the number of vertices that we need to solve convex programs for.
The order in which we iterate over the neighbors is mostly random with the caveat discussed below.

We also make use of geometric information coming the optimal parameter values given the current vertex to preferentially try some subsets of neighboring vertices first.
If they lie at the boundary of the current activation region, then it stands to reason that activation regions on the other side of that boundary are more likely to have better solutions.
Solutions lying on the boundary of an activation region have a subset of preactivations that are exactly zero.
Equivalently, a subset of the inequalities in \eqref{eq:activation_region_constrained_opt} become equalities at the solution.
In such cases, we first try the vertex that has all of those constraints flipped.
Note that this vertex is not usually a neighbor of the current current in the 1-skeleton of the zonotope and might not even be feasible.
If feasible and $k$ constraints are flipped, then that vertex and the current vertex belong to the same $k$-face of the zonotope.
We then try the neighbors of the current vertex that correspond to flipping one of those constraints.
Afterwards, we try the remaining neighbors.

\section{Experimental Details}\label{sec:exp_details}

\subsection{Synthetic Data}

\subsubsection{Synthetic Dataset Generation}\label{sec:synth_data_gen}

Here we present the details of the generation of the synthetic datasets used in the experiments in this paper.

We start out with the dimension of the input $d$ and the number of units $\mgen$ in the shallow ReLU network used to generate the labels.
We use this to calculate the number of examples $N = (d+1)\mgen$.
We then generate the examples $\{\x_i\}_{i=1}^N \subseteq \R^d$ by sampling them i.i.d.\ from a standard normal distribution.

We use a randomly generated ReLU network to label these examples.
We can express this network as
\begin{equation}
    g(\x) = \v_\text{gen}^T\phi(W_\text{gen}\bar\x) + c_\text{gen}.
\end{equation}
We generate the parameters $\v_\text{gen}\in\R^{\mgen}$, $W_\text{gen}\in\R^{\mgen\times (d+1)}$, and $c_\text{gen}\in\R$ by via sampling from standard normal distributions.
The label for the $i$-th example can then be expressed as $y_i = g(\x_i)$.

\subsubsection{Training Details}\label{sec:exp_training_details}

All experiments on the synthetic datasets used the mean squared error (MSE) loss.
The network architecture for these experiments takes the form of
\begin{equation}
    f(\x) = \v^T\phi(W\bar\x) + c,
\end{equation}
where $\v\in\R^m$, $W\in\R^{m\times (d+1)}$, and $c\in\R$.
We use $m\in\N$ to denote the number of units in the network that we train on the dataset.

\paragraph{Gradient Descent}

All experiments involving gradient descent on synthetic datasets in this paper used batch gradient descent with a learning rate of 1e-3 for 400,000 steps.
All parameters, including the second layer weights $\v,c$, were trained.
We used the parameter initialization scheme from \citet{glorot2010understanding}.

\paragraph{Random Vertex}

A random vertex was selected by sampling the first layer weights $W\in\R^{m\times (d+1)}$ from a standard Gaussian and taking its corresponding activation pattern over the dataset.
We randomly initialize $\v$ with values chosen uniformly from the set $\{-1, 1\}$.
Optimizing over all of the parameters of a shallow ReLU network within a single activation region is non-convex.
However, the problems of training $W,c$ given a fixed $\v$ and training $\v, c$ given a fixed $W$ are convex.
The former is a slight variant of \eqref{eq:activation_region_constrained_opt} while the latter is simple linear regression over fixed features.
We thus iterate between solving these two problems until we converge to fixed loss value.

This process is guaranteed to converge to a local minima of the loss \citep{xu2013block}.
However, it is possible that our process of optimizing the second layer weights here is suboptimal and does not reach the global optimum within the activation region.

\paragraph{GLS Heuristic}

Here we fix $\v\in\R^m$ to have $m/2$ entries set to -1 and $m/2$ entries set to +1.
We slightly modify the convex problem \eqref{eq:activation_region_constrained_opt} to include optimizing the $c\in\R$ in addition to the first layer weights $W\in\R^{m\times(d+1)}$.
We choose the starting vertex at random by sampling the first layer weights $W$ from a standard Gaussian and taking its corresponding activation pattern over the dataset.
We set $T=1024$ as the maximum number of steps.

\subsubsection{Full Results}

We experimented with a range of $d$, $\mgen$, and $m$ values.
Present our full results in \cref{table:full_exp_results}.
The scores represent the median of 16 runs for random vertex scores and the median of 8 runs for the rest.

\begin{table*}[t]
\caption{Results of all synthetic data experiments performed in this paper. The large numbers are the median final MSE over 16 runs for GLS heuristic and over 8 runs for gradient descent and random vertex.
The subscript numbers provide the standard deviation over the runs.
Some cells are empty as the particular combination of $d, \mgen, m,$ and optimization method was not needed for our set of comparisons.}
\label{table:full_exp_results}
\vskip 0.15in
\begin{center}
\begin{small}
\begin{sc}
\begin{tabular}{cccccc}
\toprule
$d$ & $\mgen$ & $m$ & Gradient Descent & Random Vertex & GLS Heuristic \\
\midrule
4 & 2 & 2 & $\text{3.82e-10}_\text{3.2e-04}$ & $\text{1.25e-01}_\text{1.7e-01}$ & $\text{8.27e-01}_\text{1.4e00}$ \\
4 & 2 & 3 & $\text{3.43e-11}_\text{7.6e-10}$ & $\text{3.85e-05}_\text{1.2e-02}$ & -- \\
4 & 2 & 4 & $\text{6.01e-12}_\text{3.7e-09}$ & $\text{4.10e-08}_\text{8.1e-07}$ & $\text{1.58e-01}_\text{5.3e-01}$ \\
4 & 2 & 8 & -- & $\text{8.81e-09}_\text{1.4e-08}$ & $\text{8.29e-13}_\text{1.7e-02}$ \\
4 & 2 & 16 & -- & $\text{1.13e-08}_\text{7.3e-08}$ & $\text{4.11e-17}_\text{1.2e-16}$ \vspace{0.5em} \\
4 & 4 & 4 & $\text{8.30e-03}_\text{1.8e-01}$ & $\text{1.98e-03}_\text{6.6e-02}$ & $\text{8.11e-01}_\text{8.5e-01}$ \\
4 & 4 & 6 & $\text{5.04e-12}_\text{5.7e-10}$ & $\text{3.13e-04}_\text{7.7e-04}$ & -- \\
4 & 4 & 8 & $\text{6.54e-12}_\text{9.7e-10}$ & $\text{4.70e-05}_\text{5.3e-05}$ & $\text{8.47e-09}_\text{7.8e-02}$ \\
4 & 4 & 16 & -- & $\text{5.08e-07}_\text{2.8e-05}$ & $\text{1.68e-12}_\text{1.6e-11}$ \\
4 & 4 & 32 & -- & $\text{2.71e-07}_\text{5.1e-06}$ & $\text{1.97e-15}_\text{4.7e-15}$ \vspace{0.5em} \\
4 & 8 & 8 & $\text{8.53e-03}_\text{5.3e-03}$ & $\text{8.47e-03}_\text{8.3e-03}$ & $\text{4.68e-01}_\text{4.0e-01}$ \\
4 & 8 & 12 & $\text{2.52e-11}_\text{4.3e-10}$ & $\text{7.59e-04}_\text{9.8e-04}$ & -- \\
4 & 8 & 16 & $\text{2.21e-11}_\text{2.6e-09}$ & $\text{6.07e-04}_\text{6.9e-04}$ & $\text{9.96e-03}_\text{3.1e-02}$ \\
4 & 8 & 32 & -- & $\text{1.70e-05}_\text{7.9e-05}$ & $\text{6.68e-12}_\text{2.9e-10}$ \\
4 & 8 & 64 & -- & $\text{6.26e-06}_\text{4.8e-05}$ & $\text{3.49e-14}_\text{2.8e-14}$ \\
\midrule
8 & 4 & 4 & $\text{2.54e-03}_\text{1.2e-01}$ & $\text{1.44e-02}_\text{4.4e-02}$ & $\text{3.74e00}_\text{2.0e00}$ \\
8 & 4 & 6 & $\text{6.11e-11}_\text{1.4e-02}$ & $\text{1.11e-03}_\text{2.6e-02}$ & -- \\
8 & 4 & 8 & $\text{7.69e-12}_\text{8.2e-02}$ & $\text{6.96e-08}_\text{4.2e-05}$ & $\text{4.94e-01}_\text{4.7e-01}$ \\
8 & 4 & 16 & -- & $\text{2.24e-08}_\text{2.6e-08}$ & $\text{5.33e-12}_\text{4.1e-08}$ \\
8 & 4 & 32 & -- & $\text{6.25e-09}_\text{2.5e-09}$ & $\text{2.40e-13}_\text{1.1e-12}$ \vspace{0.5em} \\
8 & 8 & 8 & $\text{7.47e-03}_\text{1.2e-02}$ & $\text{2.45e-02}_\text{9.9e-03}$ & $\text{3.67e00}_\text{2.7e00}$ \\
8 & 8 & 12 & $\text{7.30e-12}_\text{1.2e-10}$ & $\text{1.69e-04}_\text{3.1e-04}$ & -- \\
8 & 8 & 16 & $\text{5.83e-13}_\text{7.0e-12}$ & $\text{2.07e-06}_\text{9.5e-06}$ & $\text{1.77e00}_\text{8.9e-01}$ \\
8 & 8 & 32 & -- & $\text{8.01e-08}_\text{1.1e-07}$ & $\text{3.09e-12}_\text{6.5e-11}$ \\
8 & 8 & 64 & -- & $\text{2.43e-08}_\text{6.6e-09}$ & $\text{3.15e-13}_\text{4.1e-13}$ \vspace{0.5em} \\
8 & 16 & 16 & -- & $\text{2.72e-02}_\text{3.3e-02}$ & $\text{7.07e00}_\text{2.7e00}$ \\
8 & 16 & 24 & -- & $\text{1.61e-03}_\text{1.8e-03}$ & -- \\
8 & 16 & 32 & -- & $\text{8.57e-05}_\text{2.1e-04}$ & $\text{1.47e00}_\text{6.3e-01}$ \\
8 & 16 & 64 & -- & $\text{5.62e-07}_\text{4.8e-07}$ & $\text{2.35e-02}_\text{3.9e-02}$ \\
8 & 16 & 128 & -- & $\text{1.03e-07}_\text{7.6e-08}$ & $\text{3.16e-13}_\text{4.7e-12}$ \\
\midrule
16 & 8 & 8 & -- & $\text{2.79e-02}_\text{4.5e-02}$ & $\text{1.85e01}_\text{6.4e00}$ \\
16 & 8 & 12 & -- & $\text{9.05e-07}_\text{2.2e-05}$ & -- \\
16 & 8 & 16 & -- & $\text{1.10e-07}_\text{2.0e-08}$ & $\text{8.76e00}_\text{2.7e00}$ \\
16 & 8 & 32 & -- & $\text{2.86e-08}_\text{6.4e-09}$ & $\text{1.74e00}_\text{8.2e-01}$ \\
16 & 8 & 64 & -- & $\text{9.53e-09}_\text{2.2e-09}$ & $\text{2.82e-13}_\text{7.9e-11}$ \vspace{0.5em} \\
16 & 16 & 16 & -- & $\text{4.99e-02}_\text{1.6e-02}$ & $\text{2.81e01}_\text{6.4e00}$ \\
16 & 16 & 24 & -- & $\text{4.70e-05}_\text{1.1e-04}$ & -- \\
16 & 16 & 32 & -- & $\text{4.80e-07}_\text{2.1e-06}$ & $\text{1.63e01}_\text{3.0e00}$ \\
16 & 16 & 64 & -- & $\text{8.13e-08}_\text{1.9e-08}$ & $\text{4.03e00}_\text{1.0e00}$ \\
16 & 16 & 128 & -- & $\text{2.75e-08}_\text{3.0e-09}$ & $\text{3.42e-13}_\text{3.8e-12}$ \\
\bottomrule
\end{tabular}
\end{sc}
\end{small}
\end{center}
\vskip -0.1in
\end{table*}

\subsection{Toy Versions of Real-World Datasets}\label{sec:real_world_data_exps}

\subsubsection{Dataset Creation}
Our datasets were created from the MNIST \citep{lecun2010mnist} and Fashion MNIST \citep{xiao2017fashion} datasets.
Both datasets are 10-way multiclass classification datasets; however, our mGLS algorithm only works for ReLU networks with scalar output.
Hence we have to create binary classification tasks from these datasets.

We did this by restricting each dataset to two classes and having the task to correctly differentiate between only those two classes.
For MNIST, we chose the 4 and the 9 classes.
For Fashion MNIST, we chose the pullover and the coat classes.
These classes were chosen for the interclass similarity of their examples, which increases the difficulty of the task.

To reduce the dimensionality of the data, we performed principle components analysis (PCA) using the \texttt{scikit-learn} Python package \citep{pedregosa2011scikit} on all of the training examples in each dataset belonging to their respective two chosen classes.
When then used the first $d \in \{8, 16\}$ whitened components for our dataset.
We then took the first $N \in \{350, 700\}$ examples in the training split as our training dataset.
We always chose the same examples across experiments to reduce variance.

\subsubsection{Training Details}
Since these datasets were binary classification tasks, we used the sigmoid cross entropy loss function $\ell(\hat y, y) = - y\hat y + \sigma(\hat y)$, where $\sigma$ is the logistic sigmoid function.
The network architecture for these experiments takes the form of
\begin{equation}
    f(\x) = \v^T\phi(W\bar\x) + c,
\end{equation}
where $\v\in\R^m$, $W\in\R^{m\times (d+1)}$, and $c\in\R$.
We use $m\in\N$ to denote the number of units in the network that we train on the dataset.
In all experiments in this section, set $\v$ to a vector containing half ones and half negative ones and froze it throughout training.
The rest of the variables were optimized during training.
Note that this is different than what we did for the synthetic datasets.

\paragraph{Gradient Descent}
We trained for one million steps with a learning rate of 1e-3 using batch gradient descent.
We used the parameter initialization scheme from \citet{glorot2010understanding}.

\paragraph{Random Vertex}
A random vertex was selected by sampling the first layer weights $W\in\R^{m\times (d+1)}$ from a standard Gaussian and taking its corresponding activation pattern over the dataset.
We then solved its corresponding convex program \eqref{eq:activation_region_constrained_opt} using the ECOS \citep{domahidi2013ecos} solver in the \texttt{cvxpy} Python package \citep{diamond2016cvxpy}.
We also optimized the bias $c\in\R$ in the final layer as well as the first layer parameters parameters in the convex program.

\paragraph{mGLS Heuristic}
We chose the initial vertex by sampling the first layer weights $W\in\R^{m\times (d+1)}$ from a standard Gaussian and taking its corresponding activation pattern over the dataset.
Like for the random vertex experiment, we also optimized the second layer bias $c\in\R$ in the convex program.
We used the mGLS algorithm presented in \cref{sec:mgls} to perform the optimization.
We set $T=2048$ as the maximum number of steps.

\subsubsection{Full Results}
We experimented with a range of $d,N,m$ values on both MNIST 5/9 and Fashion MNIST coat/pullover.
We present our full results comparing gradient descent to the random vertex method in \cref{table:rw_gd_rv_exp_results} and comparing gradient descent to mGLS in \cref{table:rw_gd_gls_exp_results}.
Random vertex results are the median of 16 runs while the results for the other two methods are the median of 8 runs.

\begin{table*}[t]
\caption{
Results of all experiments comparing gradient descent to random vertex optimization in this paper. The subscripts provide standard deviation across runs.}
\label{table:rw_gd_rv_exp_results}
\vskip 0.15in
\begin{center}
\begin{small}
\begin{sc}
\begin{tabular}{cccc@{\qquad}cc@{\qquad}cc}
\toprule
\multirow{2}{*}{\raisebox{-\heavyrulewidth}{Dataset}} & & & & \multicolumn{2}{c}{Gradient~Descent} & \multicolumn{2}{c}{Random~Vertex} \\
\cmidrule{5-8}
& $d$ & $N$ & $m$ & Loss & Acc~(\%) & Loss & Acc~(\%) \\
\midrule
\multirow{4}{*}{\raisebox{-\heavyrulewidth}{MNIST}}
& 8 & 350 & 4 & $\text{1.16e-01}_\text{1.1e-02}$ & $\text{95.6}_\text{0.80}$ & $\text{6.04e-01}_\text{6.7e-02}$ & $\text{67.4}_\text{8.53}$ \\
 & 8 & 350 & 8 & $\text{5.37e-02}_\text{7.9e-03}$ & $\text{98.9}_\text{0.45}$ & $\text{5.27e-01}_\text{1.1e-01}$ & $\text{72.9}_\text{8.30}$ \\
 & 8 & 350 & 16 & $\text{2.09e-02}_\text{2.6e-03}$ & $\text{99.5}_\text{0.16}$ & $\text{3.21e-01}_\text{7.3e-02}$ & $\text{85.3}_\text{4.87}$ \\
 & 8 & 350 & 32 & $\text{8.38e-03}_\text{5.1e-04}$ & $\text{100.0}_\text{0.00}$ & $\text{2.30e-01}_\text{5.0e-02}$ & $\text{91.4}_\text{2.36}$ \\
 & 8 & 350 & 64 & $\text{4.21e-03}_\text{1.5e-04}$ & $\text{100.0}_\text{0.00}$ & $\text{1.71e-01}_\text{4.9e-02}$ & $\text{94.1}_\text{1.86}$ \vspace{0.5em} \\
 & 8 & 700 & 4 & $\text{1.62e-01}_\text{1.9e-03}$ & $\text{93.2}_\text{0.73}$ & $\text{6.50e-01}_\text{8.7e-02}$ & $\text{63.4}_\text{9.13}$ \\
 & 8 & 700 & 8 & $\text{1.24e-01}_\text{6.7e-03}$ & $\text{95.0}_\text{0.56}$ & $\text{4.94e-01}_\text{8.6e-02}$ & $\text{75.9}_\text{7.11}$ \\
 & 8 & 700 & 16 & $\text{7.90e-02}_\text{6.1e-03}$ & $\text{97.6}_\text{0.49}$ & $\text{3.95e-01}_\text{6.4e-02}$ & $\text{82.1}_\text{4.34}$ \\
 & 8 & 700 & 32 & $\text{3.97e-02}_\text{1.5e-03}$ & $\text{99.4}_\text{0.05}$ & $\text{2.79e-01}_\text{5.7e-02}$ & $\text{88.6}_\text{3.02}$ \\
 & 8 & 700 & 64 & $\text{1.94e-02}_\text{6.5e-04}$ & $\text{100.0}_\text{0.04}$ & $\text{2.14e-01}_\text{2.2e-02}$ & $\text{91.5}_\text{1.06}$ \\
 \cmidrule{3-8}
 & 16 & 350 & 4 & $\text{1.78e-02}_\text{2.2e-03}$ & $\text{99.5}_\text{0.18}$ & $\text{5.95e-01}_\text{7.8e-02}$ & $\text{68.7}_\text{7.10}$ \\
 & 16 & 350 & 8 & $\text{5.75e-03}_\text{7.4e-04}$ & $\text{100.0}_\text{0.00}$ & $\text{4.89e-01}_\text{1.2e-01}$ & $\text{75.4}_\text{9.28}$ \\
 & 16 & 350 & 16 & $\text{2.49e-03}_\text{2.8e-04}$ & $\text{100.0}_\text{0.00}$ & $\text{4.14e-01}_\text{1.1e-01}$ & $\text{80.0}_\text{7.91}$ \\
 & 16 & 350 & 32 & $\text{1.12e-03}_\text{5.0e-05}$ & $\text{100.0}_\text{0.00}$ & $\text{1.39e-01}_\text{9.1e-02}$ & $\text{95.0}_\text{3.95}$ \\
 & 16 & 350 & 64 & $\text{4.87e-04}_\text{2.2e-05}$ & $\text{100.0}_\text{0.00}$ & $\text{4.81e-02}_\text{3.9e-02}$ & $\text{99.1}_\text{1.29}$ \vspace{0.5em} \\
 & 16 & 700 & 4 & $\text{3.51e-02}_\text{4.4e-03}$ & $\text{99.3}_\text{0.18}$ & $\text{6.61e-01}_\text{5.6e-02}$ & $\text{57.7}_\text{7.61}$ \\
 & 16 & 700 & 8 & $\text{1.16e-02}_\text{1.3e-03}$ & $\text{99.9}_\text{0.16}$ & $\text{5.34e-01}_\text{9.0e-02}$ & $\text{72.8}_\text{7.68}$ \\
 & 16 & 700 & 16 & $\text{4.61e-03}_\text{2.9e-04}$ & $\text{100.0}_\text{0.00}$ & $\text{4.37e-01}_\text{8.7e-02}$ & $\text{78.9}_\text{6.01}$ \\
 & 16 & 700 & 32 & $\text{2.06e-03}_\text{1.1e-04}$ & $\text{100.0}_\text{0.00}$ & $\text{2.36e-01}_\text{6.3e-02}$ & $\text{90.1}_\text{3.11}$ \\
 & 16 & 700 & 64 & $\text{1.00e-03}_\text{4.0e-05}$ & $\text{100.0}_\text{0.00}$ & $\text{1.20e-01}_\text{3.3e-02}$ & $\text{95.6}_\text{1.44}$ \\
\midrule
\multirow{4}{*}{\raisebox{-\heavyrulewidth}{Fashion MNIST}}
 & 8 & 350 & 4 & $\text{2.98e-01}_\text{1.0e-02}$ & $\text{88.4}_\text{0.67}$ & $\text{5.97e-01}_\text{7.2e-02}$ & $\text{68.4}_\text{8.60}$ \\
 & 8 & 350 & 8 & $\text{2.24e-01}_\text{1.3e-02}$ & $\text{91.4}_\text{0.71}$ & $\text{5.58e-01}_\text{5.5e-02}$ & $\text{72.9}_\text{6.28}$ \\
 & 8 & 350 & 16 & $\text{1.39e-01}_\text{9.8e-03}$ & $\text{95.9}_\text{0.90}$ & $\text{4.96e-01}_\text{5.7e-02}$ & $\text{78.6}_\text{3.39}$ \\
 & 8 & 350 & 32 & $\text{6.71e-02}_\text{7.4e-03}$ & $\text{98.8}_\text{0.43}$ & $\text{3.78e-01}_\text{2.8e-02}$ & $\text{85.1}_\text{1.68}$ \\
 & 8 & 350 & 64 & $\text{3.21e-02}_\text{2.3e-03}$ & $\text{100.0}_\text{0.19}$ & $\text{2.91e-01}_\text{3.1e-02}$ & $\text{88.1}_\text{1.34}$ \vspace{0.5em}  \\
 & 8 & 700 & 4 & $\text{3.52e-01}_\text{3.8e-03}$ & $\text{84.8}_\text{0.53}$ & $\text{6.62e-01}_\text{5.1e-02}$ & $\text{60.5}_\text{7.12}$ \\
 & 8 & 700 & 8 & $\text{3.15e-01}_\text{6.4e-03}$ & $\text{86.6}_\text{0.63}$ & $\text{5.99e-01}_\text{4.4e-02}$ & $\text{69.3}_\text{4.66}$ \\
 & 8 & 700 & 16 & $\text{2.48e-01}_\text{1.0e-02}$ & $\text{90.5}_\text{0.76}$ & $\text{5.17e-01}_\text{4.2e-02}$ & $\text{76.1}_\text{3.20}$ \\
 & 8 & 700 & 32 & $\text{1.65e-01}_\text{5.0e-03}$ & $\text{94.3}_\text{0.50}$ & $\text{4.31e-01}_\text{3.0e-02}$ & $\text{80.7}_\text{1.78}$ \\
 & 8 & 700 & 64 & $\text{8.70e-02}_\text{6.0e-03}$ & $\text{98.3}_\text{0.37}$ & $\text{3.66e-01}_\text{1.7e-02}$ & $\text{84.4}_\text{1.27}$ \\
 \cmidrule{3-8}
 & 16 & 350 & 4 & $\text{1.93e-01}_\text{1.6e-02}$ & $\text{92.4}_\text{1.25}$ & $\text{6.24e-01}_\text{6.2e-02}$ & $\text{64.4}_\text{9.07}$ \\
 & 16 & 350 & 8 & $\text{8.33e-02}_\text{1.3e-02}$ & $\text{98.0}_\text{0.70}$ & $\text{5.46e-01}_\text{6.0e-02}$ & $\text{72.7}_\text{5.67}$ \\
 & 16 & 350 & 16 & $\text{3.71e-02}_\text{3.7e-03}$ & $\text{99.3}_\text{0.14}$ & $\text{4.27e-01}_\text{5.2e-02}$ & $\text{80.1}_\text{3.29}$ \\
 & 16 & 350 & 32 & $\text{1.43e-02}_\text{3.3e-03}$ & $\text{100.0}_\text{0.20}$ & $\text{3.00e-01}_\text{4.4e-02}$ & $\text{87.3}_\text{2.31}$ \\
 & 16 & 350 & 64 & $\text{6.72e-03}_\text{4.3e-04}$ & $\text{100.0}_\text{0.00}$ & $\text{1.65e-01}_\text{1.0e-01}$ & $\text{94.3}_\text{4.22}$ \vspace{0.5em} \\
 & 16 & 700 & 4 & $\text{2.84e-01}_\text{1.1e-02}$ & $\text{88.1}_\text{0.48}$ & $\text{6.30e-01}_\text{4.6e-02}$ & $\text{64.8}_\text{5.66}$ \\
 & 16 & 700 & 8 & $\text{1.95e-01}_\text{8.5e-03}$ & $\text{92.0}_\text{0.31}$ & $\text{5.95e-01}_\text{4.9e-02}$ & $\text{68.4}_\text{4.37}$ \\
 & 16 & 700 & 16 & $\text{1.13e-01}_\text{7.5e-03}$ & $\text{96.5}_\text{0.58}$ & $\text{5.19e-01}_\text{4.6e-02}$ & $\text{75.7}_\text{3.27}$ \\
 & 16 & 700 & 32 & $\text{4.53e-02}_\text{3.4e-03}$ & $\text{99.4}_\text{0.25}$ & $\text{4.26e-01}_\text{2.4e-02}$ & $\text{81.4}_\text{1.45}$ \\
 & 16 & 700 & 64 & $\text{2.05e-02}_\text{1.5e-03}$ & $\text{100.0}_\text{0.06}$ & $\text{3.43e-01}_\text{1.5e-02}$ & $\text{85.6}_\text{1.14}$ \\
\bottomrule
\end{tabular}
\end{sc}
\end{small}
\end{center}
\vskip -0.1in
\end{table*}

\begin{table*}[t]

\caption{
Results of all experiments comparing gradient descent to mGLS in this paper. The subscripts provide standard deviation across runs.}
\label{table:rw_gd_gls_exp_results}
\vskip 0.15in
\begin{center}
\begin{small}
\begin{sc}
\begin{tabular}{cc@{\qquad}cc@{\qquad}cc}
\toprule
\multirow{2}{*}{\raisebox{-\heavyrulewidth}{Dataset}} &
\multirow{2}{*}{\raisebox{-\heavyrulewidth}{$m$}} & \multicolumn{2}{c}{Gradient~Descent} & \multicolumn{2}{c}{mGLS~Heuristic} \\
\cmidrule{3-6}
& & Loss & Acc~(\%) & Loss & Acc~(\%) \\
\midrule
\multirow{4}{*}{\raisebox{-\heavyrulewidth}{MNIST}}
& 4 & $\text{1.16e-01}_\text{1.1e-02}$ & $\text{95.6}_\text{0.80}$ & $\text{1.09e-01}_\text{3.7e-02}$ & $\text{95.6}_\text{1.76}$ \\
& 8 & $\text{5.37e-02}_\text{7.9e-03}$ & $\text{98.9}_\text{0.45}$ & $\text{1.95e-03}_\text{1.5e-02}$ & $\text{100.0}_\text{0.58}$ \\
& 16 & $\text{2.09e-02}_\text{2.6e-03}$ & $\text{99.5}_\text{0.16}$ & $\text{8.24e-04}_\text{1.2e-03}$ & $\text{100.0}_\text{0.00}$ \\
& 32 & $\text{8.38e-03}_\text{5.1e-04}$ & $\text{100.0}_\text{0.00}$ & $\text{3.73e-02}_\text{8.5e-03}$ & $\text{99.0}_\text{0.61}$ \\
\midrule
\multirow{4}{*}{\raisebox{-\heavyrulewidth}{Fashion MNIST}}
& 4 & $\text{2.98e-01}_\text{1.0e-02}$ & $\text{88.4}_\text{0.67}$ & $\text{2.88e-01}_\text{1.8e-02}$ & $\text{88.3}_\text{0.64}$ \\
& 8 & $\text{2.24e-01}_\text{1.3e-02}$ & $\text{91.4}_\text{0.71}$ & $\text{1.72e-01}_\text{4.5e-02}$ & $\text{93.7}_\text{1.99}$ \\
& 16 & $\text{1.39e-01}_\text{9.8e-03}$ & $\text{95.9}_\text{0.90}$ & $\text{2.91e-03}_\text{1.5e-02}$ & $\text{100.0}_\text{0.47}$ \\
& 32 & $\text{6.71e-02}_\text{7.4e-03}$ & $\text{98.8}_\text{0.43}$ & $\text{2.33e-02}_\text{6.1e-02}$ & $\text{99.4}_\text{2.45}$ \\
\bottomrule
\end{tabular}
\end{sc}
\end{small}
\end{center}
\vskip -0.1in
\end{table*}

\end{document}